\def\E{\mathbb{E}}
\def\hf{\widehat{f}}
\def\hp{\widehat{p}}
\def\bs{\boldsymbol}
\def\bsalpha{\bs{\alpha}}
\def\bstau{\bs{\tau}}
\def\bss{\bs{s}}
\def\bsomega{\bs{\omega}}
\def\bsx{\bs{x}}
\def\tbsx{\tilde{\bsx}}
\newcommand\restr[2]{{% we make the whole thing an ordinary symbol
  \left.\kern-\nulldelimiterspace % automatically resize the bar with \right
  #1 % the function
  \littletaller % pretend it's a little taller at normal size
  \right|_{#2} % this is the delimiter
  }}
\newcommand{\littletaller}{\mathchoice{\vphantom{\big|}}{}{}{}}
\begin{document}

% ---------------------------------------------------------------
% TODO REVIEW: Replace with your title
\title{On Spectral Properties of Gradient-based Explanation Methods}
% \title{Spectral Analysis of Gradient-based Pixel Attribution} 

% TODO REVIEW: If the paper title is too long for the running head, you can set
% an abbreviated paper title here. If not, comment out.
\titlerunning{Spectral Properties of Gradient-based Explanation Methods}

% TODO FINAL: Replace with your author list. 
% Include the authors' OCRID for the camera-ready version, if at all possible.
\author{Amir Mehrpanah\inst{1}\orcidlink{0000-0002-6193-7126} \and
Erik Englesson\inst{1}\orcidlink{0000-0003-4535-2520} \and
Hossein Azizpour\inst{1}\orcidlink{0000-0001-5211-6388}}

% TODO FINAL: Replace with an abbreviated list of authors.
\authorrunning{A.~Mehrpanah et al.}
% First names are abbreviated in the running head.
% If there are more than two authors, 'et al.' is used.

% TODO FINAL: Replace with your institution list.
\institute{Department of Computer Science, Royal Institute of Technology, Stockholm, Sweden 
\email{\{amirme,engless,azizpour\}@kth.se}}

\maketitle

\begin{abstract}
% Understanding the behavior of deep networks is crucial to increase our confidence in their results. Despite extensive efforts in explaining the behavior of deep networks, researchers have faced with reliability issues due to the lack of formalism. In our research, we adopt probabilistic and spectral perspectives to analyze explanation methods more formally. Our study reveals a pervasive spectral bias stemming from the use of gradient, and sheds light on other design choices that have been discovered experimentally, such as the use of squared gradient and input perturbation, which has led to inconsistent explanations depending on hyperparameters. We introduce two possible theoretically grounded ways to address the issue: Finding the optimal value for perturbation scale and a method which we call SpectralLens. Finally, we support our results through quantitative evaluations.

Understanding the behavior of deep networks is crucial to increase our confidence in their results. Despite an extensive body of work for explaining their predictions, researchers have faced reliability issues, which can be attributed to insufficient formalism. In our research, we adopt novel probabilistic and spectral perspectives to formally analyze explanation methods. Our study reveals a pervasive spectral bias stemming from the use of gradient, and sheds light on some common design choices that have been discovered experimentally, in particular, the use of squared gradient and input perturbation. We further characterize how the choice of perturbation hyperparameters in explanation methods, such as SmoothGrad, can lead to inconsistent explanations and introduce two remedies based on our proposed formalism: (i) a mechanism to determine a standard perturbation scale, and (ii) an aggregation method which we call SpectralLens. Finally, we substantiate our theoretical results through quantitative evaluations.
    
\keywords{Probabilistic Machine Learning \and Gradient-based Explanation Methods \and Probabilistic Pixel Attribution Techniques \and Explainability \and Deep Neural Networks \and Spectral Analysis}
\end{abstract}

\section{Introduction}
\label{introduction}
Ensuring the interpretability of machine learning models is crucial for user's trust and understanding \cite{ras_explanation_2018,adebayo_sanity_2020,arrieta_explainable_2019}. 
While considerable efforts have been dedicated to advancing explainability in the field \cite{zeiler_visualizing_2013,ribeiro_why_2016,lundberg_unified_2017,sundararajan_axiomatic_2017,rudin_stop_2019,selvaraju_grad-cam_2020}, there is a notable gap in systematically formalizing and consolidating existing knowledge \cite{ancona_towards_2018,bansal_sam_2020,chen_true_2020,covert_explaining_2022}.
Insufficient formal analysis leaves too much room for different interpretations of the explanations, making them unreliable \cite{harel_inherent_2022} which is detrimental to the very use case of explanations, i.e., human-critical applications.
Therefore, more attention is essential to retrospectively analyze and structure the wealth of work done, creating a cohesive foundation for explainability \cite{marques-silva_delivering_2022,wilming_theoretical_2023}.
%Although theoretical studies of explanation methods have provided deep insight \cite{sundararajan_axiomatic_2017,covert_explaining_2022,lundberg_unified_2017}, yet they provide no solution in the case of inconsistency between the explanations generated by different hyperparameters \cite{kindermans_reliability_2017,alvarez-melis_robustness_2018,brughmans_disagreement_2023,rong_consistent_2022}.
\\One main reason for the unreliability of explanation methods is their susceptibility to hyperparameters, leading to inconsistent explanations \cite{kindermans_reliability_2017,alvarez-melis_robustness_2018,brughmans_disagreement_2023,rong_consistent_2022,krishna_disagreement_2022}, see Fig.~\ref{fig:example-sgsq-row}. Among the several works that formally study explanation methods~\cite{sundararajan_axiomatic_2017,covert_explaining_2022,lundberg_unified_2017}, to our knowledge, none shed light on such inconsistencies. Therefore, in this work, we adopt a novel probabilistic and spectral perspective to characterize the observed inconsistencies and propose preliminary solutions. 

\subsubsection{Our Contribution.}
We iterate our contributions in three parts:
\begin{itemize}
    \item \textbf{Analysis:} We introduce a probabilistic representation for post-hoc gradient-based explanation methods in \cref{probabilistic-representation}, and perform a spectral analysis of them in \cref{spectral-representation-explanation}.  
    \item \textbf{Identified Issues:} Our examination reveals a shared spectral bias stemming from the fundamental assumption of utilizing gradients to measure feature contributions. We show that the combined spectral effects of gradient and perturbation lead to band-pass filters modulating the explanations, which could lead to inconsistencies in the explanations, discussed in \cref{spectral-properties-explainer-perturbation}.
    Moreover, we conduct a spectral analysis of a few specific perturbation distributions that have been proposed in the literature in \cref{spectral-analysis-specific-kernels}. 
    \item \textbf{Solutions and Evaluations:} To address inconsistencies, we investigate two straightforward solutions that are finding an optimal perturbation scale based on the cosine similarity of the perturbation and the classifier in the frequency domain in \cref{kc-similarity-subsection}, and an aggregation mechanism named SpectralLens in \cref{aggregation-of-information}.
    Finally, we evaluate the proposed methods using the pixel removal strategy \cite{rong_consistent_2022}, acknowledging their effectiveness and limitations under \cref{evaluation}.
\end{itemize}

\begin{figure*}[t]
    \centering
    \begin{subfigure}[b]{0.24\textwidth}
    \includegraphics[width=\textwidth]{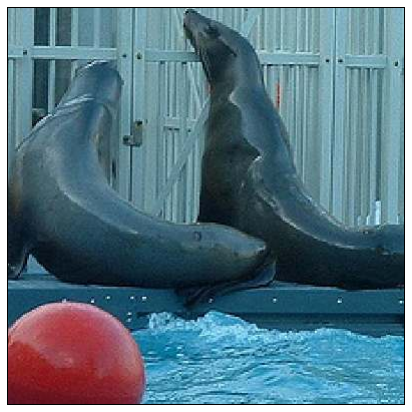}
    \caption{}
    \end{subfigure}
    \begin{subfigure}[b]{0.24\textwidth}
    \includegraphics[width=\textwidth]{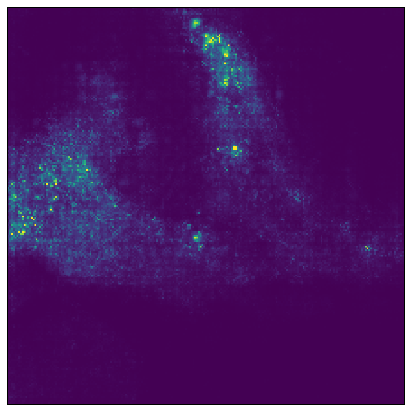}
    \caption{}
    \end{subfigure}
    \begin{subfigure}[b]{0.24\textwidth}
    \includegraphics[width=\textwidth]{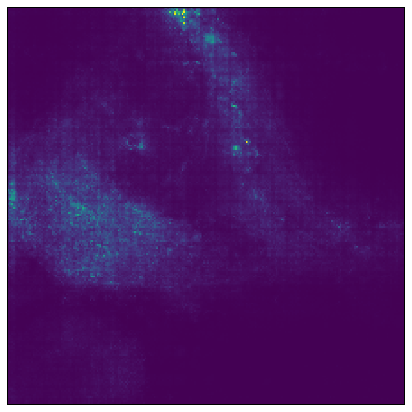}
    \caption{}
    \end{subfigure}
    \begin{subfigure}[b]{0.24\textwidth}
    \includegraphics[width=\textwidth]{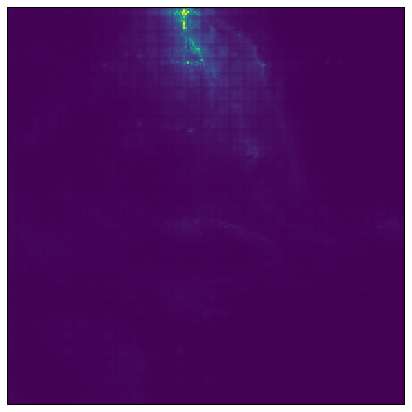}
    \caption{}
    \end{subfigure}
    \caption{The figure illustrates sample explanations generated by varying hyperparameters of SmoothGrad-Squared \cite{hooker_benchmark_2019,smilkov_smoothgrad_2017} for ResNet-50 on ImageNet (see Fig.~\ref{fig:example-sgsq-row-appendix} in the Appendix for more examples). It is evident from the figure that using different hyperparameters results in diverse explanations. Such variations have been demonstrated in recent studies for other explanation techniques as well \cite{kindermans_reliability_2017,alvarez-melis_robustness_2018,brughmans_disagreement_2023,rong_consistent_2022}.
    }
    \label{fig:example-sgsq-row}
\end{figure*}

\section{Background}
\subsubsection{Notation.}
We consider a classification setting, that is, modeling $p(y|\bsx)$ for labels $y\in\{1,\dots,k\}$ and $\bsx\in\mathbb{R}^n$ the set of input images. We denote the $\log p(y|\bsx)$  by $f$.
We denote standard Fourier bases by $\{\psi(\bsomega,\bsx)\}_{\bsomega\in\mathbb{R}^n}$, according to which, under some regularity conditions \cite{rahaman_spectral_2019}, 
$f$ can be decomposed as $f(\bsx) = \int \hf(\bsomega) \psi(\bsomega,\bsx) d\bsomega$.
In this representation, $\hf$ denotes the Fourier transform (FT) of $f$. We denote the power spectral density (PSD) of $f$ by $\operatorname{S}_{f}(\bsomega)=|\hf(\bsomega)|^2$.

\subsection{A Probabilistic Representation of Explanations}
\label{probabilistic-representation}
In this view, each explanation method consists of three components: (1) a distribution for perturbation of the input, (2) an explainer, and (3) a summary statistic.
We use $p(\tbsx|\bsx,\sigma)$ to denote the distribution of perturbations of the input sample parameterized by $\sigma$, referred to as the perturbation scale.
A definition of ``explanation'' is recovering the contribution of features to a prediction, which emanates from human's natural counterfactual reasoning \cite{covert_explaining_2022}. Therefore, the explainer is usually the gradient with respect to input $\nabla_{\bsx}$, or approximations of it, \eg the finite difference, given by $\Delta_{\bsx} f(\boldsymbol{x})= \frac{f(\tbsx)-f(\bsx)}{\tbsx-\bsx}$ (see \cref{appendix:finite_difference}).
Finally, we assume that the summary statistic is the expectation of the outputs, or the squared outputs, of the explainer, \eg $\left(\nabla f(\tbsx)\right)^2$, over the chosen perturbation distribution, $p(\tbsx|\bsx,\sigma)$, denoted as $\E_{p(\tbsx)}\left[(\nabla f(\tbsx))^2\right]$, where the operations are carried out pixelwise. This is referred to as the explanation of $f$ at $\bsx$ with a hyperparameter $\sigma$.
%We provide theoretical justifications for why using squared gradients is preferable to gradients, explaining the empirical findings \cite{hooker_benchmark_2019}, yet we will discuss both cases later in \cref{spectral-representation-explanation}. As a supplementary observation, it can be shown that with a probabilistic view, one can unify many explanation methods into a single form (see \cref{appendix:unifying}).
This probabilistic view can, in fact, unify different explanation methods into a single form (see \cref{appendix:unifying}). We provide theoretical justifications for why using squared gradients is preferable to gradients, explaining the empirical findings \cite{hooker_benchmark_2019}, and will discuss both cases in \cref{spectral-representation-explanation}.

\section{Spectral Analysis of Explanations}
\label{spectral_bias}
This section demonstrates the spectral property in explanations when using the gradient explainer and its interaction with perturbations, where a significant difference in the frequency of learned features leads to a selective attribution of features.
In the following sections, we draw on insights from spectral analysis of signals and systems theory, utilizing this viewpoint to gain a deeper understanding of explanation methods. We demonstrate a spectral bias related to gradient-based explanation approaches, identify its source, and interpret various methods in the literature as efforts to alleviate it through the incorporation of low-pass filters, either explicitly \cite{smilkov_smoothgrad_2017,sundararajan_axiomatic_2017} or implicitly \cite{zeiler_visualizing_2013,petsiuk_rise_2018}. 
We illustrate that the inconsistencies in explanation techniques stem from different strategies for mitigating the spectral bias. Finally, we suggest two straightforward approaches to tackle these inconsistencies: determining an optimal perturbation scale in \cref{kc-similarity-subsection} and aggregating the explanations in \cref{aggregation-of-information}.

Writing a spectral representation for prediction, as shown in \cref{spectral-representation-prediction}, the model output with a perturbation distribution can be written as a convolution between the function and the perturbation distribution $f*p$, therefore we may use ``perturbation kernel'' and ``perturbation distribution'' interchangeably.

\subsection{Spectral Representation of Explainers}
\label{spectral-representation-explanation}
Here, by drawing a connection between explainability techniques and the literature on spectral signal processing \cite{blackman_measurement_1958,marmarelis_white-noise_1978,stoica_spectral_2005}, we take a novel avenue for our analysis.
This connection enables us to have a spectral representation of the explanations, as discussed in the next sections (see \cref{appendix:derivations} for the derivations).
It is important to note that the Fourier bases in our work is 1D and per pixel, and should not be confused with the traditional Fourier bases used in image processing literature that expands along spatial dimensions.

\subsubsection{Gradients in Spectral Domain.}
For explanation methods that use gradients and a perturbation kernel, $p(\tbsx)$ we have the following (see \cref{appendix:derivations-gradient}).
\begin{align}
    \E_{p(\tbsx)}[\nabla f(\tbsx)] &=\int i2\pi\bsomega \hf(\bsomega) \hp(\bsomega) d \bsomega\\
    &=-2\pi\left[\int \bsomega(\hf_E(\bsomega)\hp_O(\bsomega) + \hf_O(\bsomega)\hp_E(\bsomega)) d \bsomega\right]
    \label{eq:general-spectral-representation-gradients}
\end{align}
where $\widehat{p}$ denotes the FT of the perturbation kernel. We used $\hf_E$ and $\hf_O$ to denote a function $\hf$ is decomposed into real even and imaginary odd components $\hf=i\hf_O+\hf_E$\footnote{Also note that as we are using real valued functions, other components \ie real odd and imaginary even, are zero.}. The negative sign suggests that in case of using the gradients, we need to negate the actual values to obtain rankings. We will elaborate more on the negation of the attributions in \cref{evaluation}.

\begin{remark}
    Note the symmetry in the form of the spectral representation of the gradients. This form implies that it is inevitable for any perturbation kernel to extract asymmetric information about even and odd parts of the classifier $f$. Therefore, the gradient explainer with a kernel symmetric about the sample being explained, only extracts information about the odd parts of the classifier.
    This is problematic because the attribution is expected to reflect the total contribution to the prediction.
\end{remark}
We will use this interesting symmetry later in \cref{rect-kernel-analysis} for analysis of a specific perturbation kernel.

\subsubsection{Squared Gradients in Spectral Domain.}
On the other hand, for explanation methods which use squared gradients, we have the following (see \cref{appendix:derivations-sq-gradient}).
\begin{equation}
    \E_{p(\tbsx)}\left[(\nabla f(\tbsx))^2\right]=4\pi^2\int\|\bsomega\|^2 \operatorname{S}_f(\bsomega) \operatorname{S}_{\sqrt{p}}(\bsomega) d\bsomega
    \label{eq:general-spectral-representation-sqgradients}
\end{equation}
where $\operatorname{S}_{f}$ and $\operatorname{S}_{\sqrt{p}}$ denote the PSD of the model and square root of the perturbation kernel respectively.

We discuss the benefits of using squared gradients as in \cref{eq:general-spectral-representation-sqgradients} over gradients as in \cref{eq:general-spectral-representation-gradients} later in \cref{spectral-analysis-specific-kernels}. However, it can be noticed here that the squared gradients, unlike gradients, are always non-negative, extract information about the PSD of $f$, and do not have a complex symmetric structure.

\subsection{Gradient Explainers with Perturbations Are Band-Pass Filters}
\label{spectral-properties-explainer-perturbation}
In our probabilistic framework, various explanation methods can be decomposed into three components (1) an explainer, (2) a perturbation kernel and (3) a summary statistic. In this section, we analyze the spectral effects of the two core components, namely gradient and perturbation.
Importantly, we show that gradient acts as a high-pass filter, and perturbation acts as a low-pass filter.
Before that, we need a more formal definition for a high-pass and low-pass filter that suits our study. Assuming that $\widehat{m}(\bsomega)$ denotes the FT of a perturbation kernel, we define low- and high-pass as follows.
\begin{definition}
    A filter $\widehat{m}(\bsomega)$ is low pass, if for a given $M>0$ there exists a threshold $\bsomega^*$ such that for any $\|\bsomega\|>\|\bsomega^*\|$, we have $\|\widehat{m}(\bsomega)\|<M$, and it is high pass if such a condition holds for $\|\bsomega\|<\|\bsomega^*\|$.
\end{definition}

For better readability, the proofs for the theorems and propositions are outlined in \cref{prop:mitigation-proof}, and we provide intuition as we progress. 

\begin{theorem}
    \label{prop:mitigation}
    Any perturbation kernel $p(\tbsx)$ mitigates the attribution of high-frequency features; simply put, \textbf{any perturbation is a low-pass filter}.
\end{theorem}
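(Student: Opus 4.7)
The plan is to reduce the statement to a direct invocation of the Riemann--Lebesgue lemma, using the fact that any perturbation kernel is, by assumption, a probability density. Concretely, I need to verify that $\hp(\bsomega)$ decays in the sense required by the paper's definition: for any $M>0$, there should exist a threshold $\bsomega^*$ such that $|\hp(\bsomega)|<M$ whenever $\|\bsomega\|>\|\bsomega^*\|$. Notice that this is equivalent to saying that $\hp$ vanishes at infinity, which is exactly the content of Riemann--Lebesgue for integrable functions on $\mathbb{R}^n$.

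First I would record two elementary consequences of $p$ being a probability density: (i) $p(\tbsx)\geq 0$ pointwise, and (ii) $\int p(\tbsx)\,d\tbsx = 1$. Property (ii) immediately gives $p\in L^1(\mathbb{R}^n)$. As a companion bound, (i) and (ii) together imply $|\hp(\bsomega)| \leq \int |p(\tbsx)|\,d\tbsx = 1 = \hp(\mathbf{0})$, so the FT is uniformly bounded and attains its maximum at the origin; this already makes precise the intuition that low frequencies are preferred over high ones.

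Next I would invoke the Riemann--Lebesgue lemma in $\mathbb{R}^n$: for any $g\in L^1(\mathbb{R}^n)$, its Fourier transform $\widehat{g}$ is continuous and $\widehat{g}(\bsomega)\to 0$ as $\|\bsomega\|\to\infty$. Applying this with $g=p$ yields, for every $M>0$, a radius $R_M$ such that $|\hp(\bsomega)|<M$ whenever $\|\bsomega\|>R_M$. Choosing any $\bsomega^*$ with $\|\bsomega^*\|=R_M$ then matches the definition of a low-pass filter verbatim, completing the argument.

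I do not expect any serious obstacle here, since the proof is essentially a one-line application of Riemann--Lebesgue once $p\in L^1$ is noted. The only delicate point to flag is the interpretation of the paper's low-pass definition: it must be read as ``for every $M>0$'' (not ``for some $M>0$'', which would be trivial). It is also worth remarking, as a brief aside, that the same reasoning shows $\hp$ cannot be high-pass in any nondegenerate sense, since $\hp(\mathbf{0})=1$ is the global maximum of $|\hp|$; this reinforces the asymmetry between perturbation and gradient that the subsequent sections exploit.
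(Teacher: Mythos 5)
Your proof is correct and follows essentially the same route as the paper's: both note that a probability density is in $L^1$, invoke the Riemann--Lebesgue lemma to conclude $\hp(\bsomega)\to 0$ as $\|\bsomega\|\to\infty$, and observe the bound $\|\hp(\bsomega)\|\leq 1$ from the characteristic-function property. Your additional remarks (the maximum at the origin, the quantifier reading of the definition) are sound but do not change the argument.
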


The intuition is that any modification to a signal, would only remove information, and, as the intensity of perturbation increases, the removal goes from high-frequency components toward lower frequencies.

\begin{theorem}
    \label{prop:amplification}
    The gradient operator amplifies the attribution of high-frequency features; simply put, \textbf{the gradient operator is a high-pass filter}.
\end{theorem}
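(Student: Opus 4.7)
The plan is to identify the multiplicative frequency-domain response induced by the gradient operator and verify it directly against the formal definition of a high-pass filter that was just stated. The starting point is the elementary Fourier identity that spatial differentiation corresponds to multiplication by $i2\pi\bsomega$ in the frequency domain; this is already visible in \cref{eq:general-spectral-representation-gradients}, where the integrand carries the factor $i2\pi\bsomega$ alongside $\hf(\bsomega)\hp(\bsomega)$. Viewing the gradient explainer as a linear operator acting on $f$, its transfer function is therefore $\widehat{m}(\bsomega) = i2\pi\bsomega$, with magnitude $\|\widehat{m}(\bsomega)\| = 2\pi\|\bsomega\|$.

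Next, I would verify the high-pass condition directly from the definition above. Given any $M > 0$, choose $\bsomega^*$ with $\|\bsomega^*\| = M/(2\pi)$; then for every $\bsomega$ with $\|\bsomega\| < \|\bsomega^*\|$ we have $\|\widehat{m}(\bsomega)\| = 2\pi\|\bsomega\| < 2\pi\|\bsomega^*\| = M$, which matches the high-pass property verbatim. Symmetrically, since $\|\widehat{m}(\bsomega)\|$ is unbounded as $\|\bsomega\| \to \infty$, no threshold of the low-pass form can exist, so the gradient is strictly high-pass and not low-pass. I would then repeat the same one-line verification for the squared-gradient form in \cref{eq:general-spectral-representation-sqgradients}, whose analogous transfer function has magnitude $4\pi^2\|\bsomega\|^2$; the choice $\|\bsomega^*\| = \sqrt{M}/(2\pi)$ works identically, so squaring preserves the high-pass character (and in fact sharpens the roll-off near the origin).

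Putting this theorem beside \cref{prop:mitigation} then gives the composite picture that is the real payoff of \cref{spectral-properties-explainer-perturbation}: the perturbation kernel attenuates high frequencies of $\hf$ while the gradient attenuates low frequencies, so the explainer-plus-perturbation pipeline acts as a band-pass filter on $\hf$, with the pass-band location controlled by the perturbation scale $\sigma$. This is the structural fact I would lean on in the subsequent sections when explaining hyperparameter-induced inconsistencies.

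The main obstacle, and essentially the only non-trivial aspect, is clarifying in what sense the gradient acts as multiplication by $i2\pi\bsomega$ in the frequency domain when the underlying $f$ may lack classical differentiability or fast decay. I would dispose of this by appealing to the same regularity assumptions already invoked for the Fourier decomposition of $f$ (following \cite{rahaman_spectral_2019}), ensuring that $\widehat{\nabla f}(\bsomega) = i2\pi\bsomega\,\hf(\bsomega)$ holds in the required sense and that the integrals in \cref{eq:general-spectral-representation-gradients} and \cref{eq:general-spectral-representation-sqgradients} are absolutely convergent. Beyond that regularity bookkeeping, the argument reduces to the elementary observation that $\|\bsomega\|$ (and a fortiori $\|\bsomega\|^2$) vanishes at the origin.
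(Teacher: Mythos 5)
Your proposal is correct and follows essentially the same route as the paper's own proof: identify the gradient's frequency response as (a multiple of) $\|\bsomega\|$, observe it vanishes at the origin, and exhibit an explicit threshold $\bsomega^*$ satisfying the high-pass definition. Your constant bookkeeping (the $2\pi$ factor and the choice $\|\bsomega^*\|=M/(2\pi)$) is in fact tidier than the paper's choice $\bsomega^*=\sqrt{M}\mathbf{1}$, and the added remarks on the squared-gradient case and non-low-pass behavior are consistent elaborations rather than a different argument.
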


From the stated theorems, one can see that a gradient-based explanation with perturbation creates a band-pass filter, whose shape is controlled by the perturbation hyperparameters (see \cref{fig:bands,fig:concept}). 

\begin{remark}
    This amplification-mitigation trade-off can be seen in several explanation methods, where perturbation kernels (low-pass filters) are utilized to mitigate the amplification effect (the high-pass filter) caused by the gradient. Therefore, the combined effect of the two filters usually forms \textbf{a band-pass filter} (see \cref{appendix:unifying} for a non-comprehensive list).
\end{remark}

\subsection{Spectral Analysis of Specific Perturbation Kernels}
\label{spectral-analysis-specific-kernels}
In this section, we analyze the properties of special kernels used in the literature with remarks or propositions that justify, and thus shed light on, previous work's experimental observations. In most cases, we expand upon the squared gradients, \cref{eq:general-spectral-representation-sqgradients}, due to its analytical simplicity.

\subsubsection{VanillaGrad via a Dirac Kernel.}
\label{sect:vanilla-grad}
The very naive explanation method can be obtained with a Dirac kernel $p(\tbsx)=\delta_{\bsx}(\tbsx)$ known as \textbf{VanillaGrad} (VG). Analyzing the squared version of VG and following \cref{eq:general-spectral-representation-sqgradients}, we know that the FT of the Dirac kernel is $\operatorname{S}_{\sqrt{p}}(\bsomega)=\mathbf{1}(\bsomega)$, where $\mathbf{1}(\bsomega)$ is the constant function mapping all values to 1. Thus, it can be seen that we recover the scaled PSD of $f$, \ie $\|\bsomega\|^2\operatorname{S}_f(\bsomega)$.
\begin{remark}
Scaling $\hf$ by frequency can be interpreted as the bias of the gradient explainer towards exaggeration of the attribution of high-frequency features \cite{hou_saliency_2007}.
Beside other justifications for gradient sparsity \cite{balduzzi_shattered_2018}, spectral representation of VG$^2$ and domination of high-frequency features give an orthogonal \textit{justification for why VanillaGrad explanations being usually scattered and noisy} (see \cref{fig:example-vgsq-row}).    
\end{remark}

\subsubsection{SmoothGrad via a Gaussian Kernel.}
\label{sect:guassian-kernel}
Letting $p(\tbsx) = \mathcal{N}(\bsx,\sigma^2)$, \ie a Gaussian with variance $\sigma^2$, centered on the sample being explained, we recover variants of an explanation method known as SmoothGrad (SG) or the squared gradient version SmoothGrad-Squared (SG$^2$) \cite{smilkov_smoothgrad_2017,hooker_benchmark_2019}, we can rewrite \cref{eq:general-spectral-representation-sqgradients} as follows:
\begin{equation}
    \E_{p(\tbsx)}\left[(\nabla f(\tbsx))^2\right]\propto \int \|\bsomega\|^2\operatorname{S}_f(\bsomega) e^{-8\pi^2\sigma^2\|\bsomega\|^2} d\bsomega
    \label{eq:spectral-representation-gaussian}
\end{equation}
This shows that the contributions $\operatorname{S}_f(\bsomega)$ are modulated by the combined spectral effect of the Gaussian perturbation kernel and gradient operator, which is a band pass filter of the form $\|\bsomega\|^2e^{-8\pi^2\sigma^2\|\bsomega\|^2}$, whose shape is controlled by the kernel variance (see \cref{fig:bands}).
The explanations could be inconsistent, but generally become smoother as $\sigma$ increases, which is a result of the shrinkage of the attribution of high-frequency features that can dominate the explanations with a small $\sigma$ (see \cref{fig:example-sgsq-row}).
\begin{figure}
    \centering
    \includegraphics[width=0.45\textwidth]{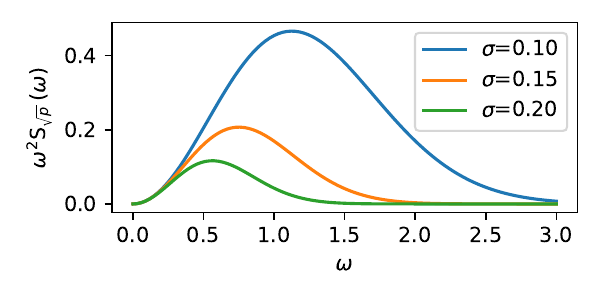}
    \caption{A visualization of frequency bands, where $\operatorname{S}_{\sqrt{p}}(\omega)$ is defined according to \cref{eq:spectral-representation-gaussian}, show that change in the noise scale $\sigma$ leads to band-pass filters with different modes, hence, representing information of different frequencies.}
    \label{fig:bands}
\end{figure}
\begin{figure}
    \centering
    \begin{subfigure}[b]{0.28\textwidth}    
    \includegraphics[width=\textwidth]{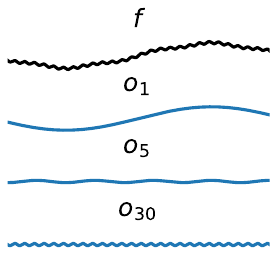}
    \caption{}
    \end{subfigure}
    \begin{subfigure}[b]{0.28\textwidth}    
    \includegraphics[width=\textwidth]{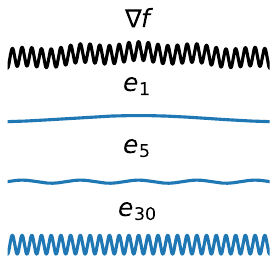}
    \caption{}
    \end{subfigure}
    \begin{subfigure}[b]{0.096\textwidth}    
    \includegraphics[width=\textwidth]{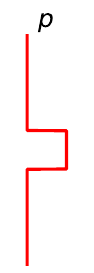}
    \caption{}
    \end{subfigure}
    \begin{subfigure}[b]{0.28\textwidth}
    \includegraphics[width=\textwidth]{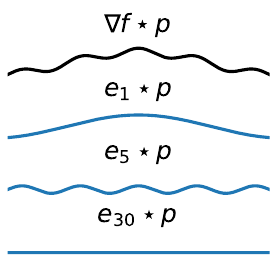}
    \caption{}
    \end{subfigure}
    \caption{A 1D visualization of amplification (b) and mitigation (d) effects caused by gradient and perturbation, respectively. For simplicity, we have shown a rectangular pulse (c) convolved with $\nabla f$, but the kernel shape is usually more complex in practice, an example shown in \cref{fig:bands}. Moreover, functions in blue are the Fourier representation of the black functions. Due to the gradient operator, small high-frequency jitters in the decision boundary (a) amplify proportionally to their frequency (b). To remove the effect of high-frequency components of the decision boundary that usually dominate the explanations, researchers have proposed different perturbation kernels, visualized in (d). \cref{prop:amplification} corresponds to (a~$\rightarrow$~b) and \cref{prop:mitigation} corresponds to (b~$\rightarrow$~d).}
    \label{fig:concept}
\end{figure}

When generating explanations, one is tempted to look for hyperparameters that make the explanation reflect what they \textit{think it should} \cite{chen_true_2020}. 
However, monitoring explanations from only one frequency band is potentially misleading and is likely to contradict itself simply by changing hyperparameters (see \cref{fig:frequency-explanations}) \cite{kindermans_reliability_2017,harel_inherent_2022}. We formalize this statement, also known as the Rashomon effect (see \cref{proofs} for the proof). 

\begin{proposition}
    The SmoothGrad-Squared pixel attribution can contradict itself by changing the hyperparameters. Simply,
    \textbf{the Rashomon effect can occur in the SmoothGrad-Squared explanations}.
\end{proposition}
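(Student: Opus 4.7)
The plan is to reduce the statement to a rank reversal between two pixels' attributions as $\sigma$ varies, and then exhibit a concrete pair of per-pixel PSDs for which that reversal occurs. First, I would specialize \cref{eq:spectral-representation-gaussian} to a single pixel $k$, writing the SmoothGrad-Squared attribution as
\begin{equation*}
E_k(\sigma) \;\propto\; \int \omega^2\, \operatorname{S}_{f_k}(\omega)\, e^{-8\pi^2\sigma^2\omega^2}\, d\omega,
\end{equation*}
where $\operatorname{S}_{f_k}$ is the 1D per-pixel PSD attached to $f$ at coordinate $k$. A short calculation gives the peak of the band-pass envelope $\omega^2 e^{-8\pi^2\sigma^2\omega^2}$ at $\omega^{\ast}(\sigma)=1/(2\sqrt{2}\,\pi\sigma)$, so the pass band sweeps continuously from high frequencies (small $\sigma$) toward low frequencies (large $\sigma$). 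This is the mechanism that will drive the rank reversal.

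Next I would construct two pixels $i$ and $j$ whose per-pixel PSDs concentrate at distinct frequencies $a > b > 0$. The cleanest template is $\operatorname{S}_{f_i}(\omega) = \delta(\omega - a)$ and $\operatorname{S}_{f_j}(\omega) = \delta(\omega - b)$, for which
\begin{equation*}
\frac{E_i(\sigma)}{E_j(\sigma)} \;=\; \frac{a^2}{b^2}\, e^{-8\pi^2 \sigma^2 (a^2 - b^2)}.
\end{equation*}
At $\sigma = 0$ this ratio equals $(a/b)^2 > 1$, whereas as $\sigma \to \infty$ the exponential drives it to $0$. Since $E_i/E_j$ is continuous in $\sigma$, an intermediate value argument furnishes $\sigma_1 < \sigma_2$ with $E_i(\sigma_1) > E_j(\sigma_1)$ and $E_i(\sigma_2) < E_j(\sigma_2)$, which is exactly a reversal of pixel rankings under a change of hyperparameter, \emph{i.e.}, the Rashomon effect for SmoothGrad-Squared.

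The main obstacle is making the construction honest: $\operatorname{S}_f$ is a genuine nonnegative spectral density rather than a Schwartz distribution, so the Dirac spikes have to be replaced by narrow nonnegative bumps (for instance, Gaussians of width $\varepsilon$ centered at $a$ and $b$). One would then verify that the two limiting regimes, and hence the crossover, persist for all sufficiently small $\varepsilon$, either by dominated convergence or by direct monotonicity of the integrands in $\sigma$. A secondary bookkeeping step is to argue that a single classifier $f$ can simultaneously realize the two prescribed per-pixel marginal PSDs; a nearly separable construction in the two pixels of interest suffices, which is all that is needed for a counterexample demonstrating the Rashomon effect.
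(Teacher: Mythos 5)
Your proposal is essentially the paper's own argument: both specialize \cref{eq:spectral-representation-gaussian} to two pixels with Dirac per-pixel PSDs at distinct frequencies, obtain the closed form $\omega_i^2 e^{-8\pi^2\sigma^2\omega_i^2}$, and show the ranking reverses as $\sigma$ crosses an explicit threshold (the paper computes the crossover $|\bar{\omega}|$ directly where you invoke the intermediate value theorem on the ratio). Your added care about mollifying the Dirac spikes and realizing the two marginal PSDs in a single classifier is a refinement the paper simply assumes away, not a different route.
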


\paragraph{Intuition.} This proposition suggests a reason, \ie  removing different frequency bands, for inconsistencies we encounter in SG$^2$ explanations, which has been unknown so far \cite{rong_consistent_2022}.
Note that this occurs with different kernels, but we have chosen the one that leads to a simpler proof.
% Thus, recovering the contributions using only one band is yet an open question.

\subsubsection{IntegratedGrad via a Rect Kernel.}
\label{rect-kernel-analysis}
Setting the perturbation kernel to $p(\tbsx) = \frac{1}{\sigma|\bss-\bsx|}\operatorname{Rect}(\frac{\tbsx-\bsx}{\sigma(\bss-\bsx)}-\frac{1}{2})$\footnote{Rect function is defined as $\operatorname{Rect}(\tbsx)=\frac{1}{2}\operatorname{sign}(\frac{1}{2}-|\tbsx|)+\frac{1}{2}$},
where $\bss$ is a chosen baseline image, 
we recover the perturbation kernel of a well known method called \textbf{IntegratedGrad} (IG) \cite{sundararajan_axiomatic_2017}.
We use the spectral representation of gradients in \cref{eq:general-spectral-representation-gradients}, for a proposition about the gradient signs.
\begin{align}
    \E_{p(\tbsx)}[\nabla f(\tbsx)] &\propto \int i\hf(\bsomega) \sin(\pi\sigma(\bss-\bsx)\bsomega)e^{i\pi\bsomega \bs{r}} d\bsomega
    \label{eq:spectral-representation-rect}
\end{align}
Where $\bs{r}=(1+2\frac{\bsx}{\sigma(\bss-\bsx)})$, appears because the kernel is not centered on the sample being explained. 
Note that the Fourier operator linearly expands along each dimension, but we use vector forms for notational simplicity.
Furthermore, to study the effect of elementwise multiplication on the explanation quantitatively, we use \textbf{IG} to denote integrated gradients without the elementwise multiplication with input and \textbf{XIG} to denote the version with elementwise multiplication.

In explanation visualizations, the negative gradient sign has been used to show a negative contribution to the prediction \cite{bach_pixel-wise_2015,shrikumar_not_2017}, to our knowledge it has never been formally justified. In the next proposition, we state that the negative gradient sign can be the direct result of the chosen perturbation kernel, formally shown in \cref{ig-sign-appendix-proof}.

\begin{proposition}
    The gradient sign in IG is affected by the perturbation level. 
    \label{ig-sign}
\end{proposition}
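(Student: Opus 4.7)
The plan is to show that, with $\bsx$, $\bss$, and $f$ held fixed, the sign of $\E_{p(\tbsx)}[\nabla f(\tbsx)]$ under the IG Rect kernel can flip as the perturbation level $\sigma$ varies. I would work mostly in the spatial domain, because it is the cleanest, and use the spectral form in \cref{eq:spectral-representation-rect} only as a sanity check.

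First, I would observe that the Rect kernel specified in \cref{rect-kernel-analysis} is uniform on the segment from $\bsx$ to $\bsx+\sigma(\bss-\bsx)$, so per coordinate the expectation telescopes by the fundamental theorem of calculus:
\begin{equation*}
\E_{p(\tbsx)}[\nabla f(\tbsx)] \;=\; \frac{f(\bsx+\sigma(\bss-\bsx))-f(\bsx)}{\sigma(\bss-\bsx)}.
\end{equation*}
Assuming $\bss-\bsx>0$ and $\sigma>0$, the sign of this quantity equals the sign of the finite difference $f(\bsx+\sigma(\bss-\bsx))-f(\bsx)$, which depends non-trivially on $\sigma$ whenever $f$ is non-monotonic along the segment.

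Second, I would construct a concrete example exhibiting the flip. Take $f$ varying non-monotonically along one coordinate, for instance a univariate log-probability with a strict local maximum at an interior point $t_0$ of the segment, say $f(t)=-(t-t_0)^2$. Choose $\sigma_1$ small enough that $\bsx+\sigma_1(\bss-\bsx)<t_0$; then the finite difference is strictly positive. Choose $\sigma_2$ large enough that $\bsx+\sigma_2(\bss-\bsx)$ lies past the reflection of $\bsx$ through $t_0$; then the finite difference is strictly negative. Hence the gradient sign of the IG estimator differs between $\sigma_1$ and $\sigma_2$, establishing the proposition. A cross-check in the spectral domain: concentrating $\hf$ near a single frequency $\bsomega_0$ collapses \cref{eq:spectral-representation-rect} to a multiple of $\sin(\pi\sigma(\bss-\bsx)\bsomega_0)$, which changes sign whenever $\sigma$ crosses an integer multiple of $1/((\bss-\bsx)\bsomega_0)$, recovering the same conclusion.

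The main obstacle is not analytical but bookkeeping: in \cref{eq:spectral-representation-rect} the factor $e^{i\pi\bsomega\bs{r}}$ carries an additional $\sigma$-dependence through $\bs{r}=(1+2\bsx/(\sigma(\bss-\bsx)))$, so a purely spectral argument must track both the $\sin$ factor and this extra phase. The spatial telescoping identity above sidesteps this issue entirely, which is why I would lead with it and treat the spectral picture as an interpretive corollary rather than the primary engine of the proof.
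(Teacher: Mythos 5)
Your proof is correct, but it takes a genuinely different route from the paper's. The paper argues entirely in the spectral domain: it assumes each pixel's spectral contribution is a symmetric Dirac pair, $\hf_i(\omega)=\tfrac{1}{2}\delta_{\omega_i}(|\omega|)$ with $\omega_i>0$, substitutes into \cref{eq:spectral-representation-rect}, and obtains the closed form $-\sin(\pi(s_i-x_i)\sigma\omega_i)\sin(\pi\omega_i r_i)$, whose sign oscillates in $\sigma$. Your single-frequency ``cross-check'' is essentially a reconstruction of that argument. Your primary argument instead works in the spatial domain: the Rect kernel is uniform on the segment from $\bsx$ to $\bsx+\sigma(\bss-\bsx)$, so the expected gradient telescopes via the fundamental theorem of calculus to the finite difference $\bigl(f(\bsx+\sigma(\bss-\bsx))-f(\bsx)\bigr)/\bigl(\sigma(\bss-\bsx)\bigr)$, and any non-monotonic $f$ along the segment forces a sign flip. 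This is more elementary, sidesteps the $\sigma$-dependent phase $\bs{r}$ that complicates the spectral bookkeeping, and connects the result to IG's completeness identity. What it does not deliver, and what the paper's version is designed to deliver, is the stronger interpretive point: the paper's construction has $\hf_i(\omega)>0$ everywhere, so the \emph{spectral} contribution of the pixel is uniformly positive while the attribution sign still flips --- which is the evidence used immediately afterward to argue that a negative IG sign need not indicate a negative contribution. If you want your proof to support that downstream remark, you would need to add a condition on $f$ analogous to the paper's positivity assumption (your concave parabola does not obviously play that role); for the proposition as literally stated, your argument suffices.
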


This proposition entails that the negative sign in the gradient explainer does not necessarily reflect a negative contribution to the prediction, nor the positive sign necessarily reflects a positive contribution (see \cref{fig:example-ig-row,fig:example-sg-row}).

\begin{remark}
    Since we assume continuity wrt $\sigma$, in our analysis, by Bolzano's theorem \cite{rudin_principles_1976}, there are hyperparameters that set the attribution to zero. Therefore, the gradient explainer may not produce a useful ranking compared to that of the gradient squared.  
\end{remark}

\begin{remark}
    \label{design-choice-squared}
    \cref{ig-sign} also shows that adding explanations with different hyperparameters may cause destructive interference when the different values of $\bsomega\hp(\bsomega)$ are out of phase. Hence, \textit{squared gradients is a more appropriate design choice}, unless the use of phase information is planned as part of the method. This has been discovered experimentally \cite{hooker_benchmark_2019} and confirmed in \cref{evaluation}.
\end{remark}

\section{Proposed Solutions Based on the Spectral Analysis}
\subsection{Finding an Optimal Perturbation Scale}
\label{kc-similarity-subsection}
Note that the squared gradients in spectral domain (\cref{eq:general-spectral-representation-sqgradients}) can be formulated as an inner product between the scaled PSD of the perturbation distribution and the PSD of the classifier \ie $\E_{p(\tbsx)}\left[(\nabla f(\tbsx))^2\right]= 4\pi^2 \langle \operatorname{S}_f , \|\bsomega\|^2\operatorname{S}_{\sqrt{p}} \rangle$. 
Assuming unimodality, a nonzero contribution, and that the norm $\left\| \|\bsomega\|^2\operatorname{S}_{\sqrt{p}} \right\|$ exists, then the cosine similarity is maximized at a certain perturbation scale that has the highest similarity with $\operatorname{S}_f$.
A higher cosine similarity guarantees recovering more information about the classifier.
% As an overly simplified intuition, one can say, it maximizes when a function of the kernel's roughness is closest to the classifier's roughness around a sample.
\begin{definition}
    The cosine similarity between the PSD of the classifier and that of the perturbation kernel given by
    \begin{equation}
        \label{eq:kc-similarity}
        \text{Similarity}(\operatorname{S}_f,\|\bsomega\|^2\operatorname{S}_{\sqrt{p}}) = 4\pi^2 \frac{\langle \operatorname{S}_f , \|\bsomega\|^2\operatorname{S}_{\sqrt{p}}\rangle}{\left\| \operatorname{S}_{f} \right\| \left\| \|\bsomega\|^2\operatorname{S}_{\sqrt{p}} \right\|}
    \end{equation}
\end{definition}
This quantity can be used as a way to determine the optimal perturbation scale per pixel to maximize the information extracted with the squared gradients.
Note that, the word ``optimal'' refers to extracting the most information by the inner product, \textit{whether deemed relevant by a human or not.}.

For one of the perturbation kernels we have presented earlier in \cref{sect:guassian-kernel}, we have optimized this quantity on multiple levels, such as dataset level \cref{fig:kc-similarity-dataset}, image level \cref{tab:evaluations}, and, pixel level \cref{fig:frequency-explanations-d}. Finally, we have experimentally verified the effectiveness of this approach in \cref{evaluation}.

\subsubsection{Computing the Similarity of Classifier and Gaussian Kernel.} To compute the similarity, we first need to evaluate the kernel norm as follows
\begin{align}
    \|\|\bsomega\|^2 e^{-8\pi^2\sigma^2\|\bsomega\|^2}\|&= \left(\int  \|\bsomega\|^4 e^{-16\pi^2\sigma^2\|\bsomega\|^2} d\bsomega\right)^{\frac{1}{2}}\\
    &\propto \sigma^{\frac{-5}{2}}
\end{align}
Therefore, the cosine similarity for the Gaussian perturbation kernel can be computed as follows:
$$\text{Similarity}(\operatorname{S}_{\sqrt{p}},\operatorname{S}_f) \propto \sigma^{\frac{5}{2}}\langle \operatorname{S}_f , \|\bsomega\|^2\operatorname{S}_{\sqrt{p}}\rangle$$
Using the L1 norm for the squared gradients of flattened explanations. We have observed experimentally that $\operatorname{S}_{f}$ depends on the input scale, hence we use a convex interpolation with noise to support the assumption that $\operatorname{S}_{f}$ remains constant (see \cref{sec:signal-to-noise-ratio,fig:combination-fn}). The results are shown in \cref{fig:kc-similarity-dataset}.

\begin{remark}
    \label{design-choice-optimal-sigma}
    The optimal perturbation scale, determined by maximizing cosine similarity, offers a standard method for selecting the perturbation scale in SG$^2$. Given that user discretion in the selection of perturbation scale contributes to inconsistencies, as illustrated in \cref{fig:example-sgsq-row}, opting for the optimal perturbation scale mitigates these inconsistencies in the results. Hence, \textit{it represents a more suitable design choice}. In \cref{evaluation} we demonstrate that this approach not only enhances SG$^2$'s robustness against inconsistencies but also its adaptability to diverse datasets, which in turn can improve the quantitative performance.
\end{remark}

\begin{figure}
    \centering
    \begin{subfigure}[b]{0.49\textwidth}
    \includegraphics[width=\textwidth]{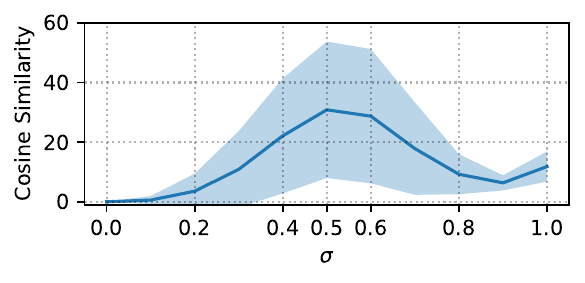}
    \caption{}
    \end{subfigure}
    \begin{subfigure}[b]{0.49\textwidth}
    \includegraphics[width=\textwidth]{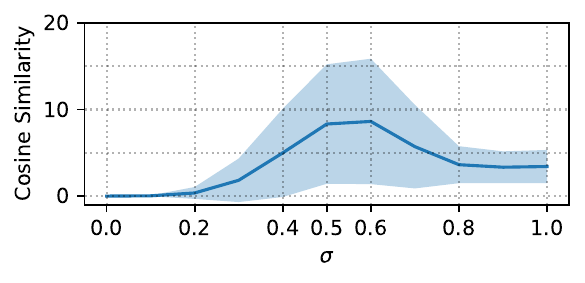}
    \caption{}
    \end{subfigure}
    \caption{Cosine similarity between kernel and classifier, defined in \cref{eq:kc-similarity}, is measured for the Gaussian perturbation kernel and ResNet50 averaged on the 10K images of the ImageNet validation set (a) and Food101 (b). This metric assesses the similarity between the PSD of the perturbation kernel and that of the classifier, indicating the perturbation level's effectiveness for kernels with finite norm in the Fourier domain. When using a Dirac explanation prior in \cref{eq:spectral-lens}, where a perturbation level is assigned per image or dataset, the optimal choice for the perturbation level is where the similarity is maximized across the image or dataset respectively. Note that this level of noise refers to a band-pass filter, shown in \cref{fig:bands}, which suggests the consideration of a latent distribution on the \textit{frequency of labeling}, reflecting the workforce's mindset during data labeling.}
    \label{fig:kc-similarity-dataset}
\end{figure}

\subsection{Aggregation of Information With Explanation Prior}
\label{aggregation-of-information}
We have shown that explanations are influenced by specific frequency bands, rather than representing the importance of individual pixels for prediction. Instead, \textit{they indicate the contributions within a specific frequency range}. This observation motivates us to investigate a method for aggregation of information across different bands, which closely resembles an ensemble of IG and SG \cite{sundararajan_axiomatic_2017,smilkov_smoothgrad_2017} which we call SpectralLens.

\begin{definition}
    \textbf{SpectralLens} is an ensemble explanation method, that aggregates information across different frequency bands as follows
    \begin{align}
    \E_{p(\sigma)}\left[\E_{p(\tbsx)}\left[(\nabla f(\tbsx))^2\right]\right]\propto \iint \operatorname{S}_f(\bsomega) \|\bsomega\|^2 e^{-8\pi^2\sigma\|\bsomega\|^2} d\bsomega p(\sigma) d\sigma
\label{eq:spectral-lens}
\end{align}
Where we refer to this method by SL$^2$ as it is using the squared gradients and we call $p(\sigma)$ as an explanation prior, gauging the frequency bands that we choose to represent in an explanation.
\end{definition}
 
\begin{remark}
    \label{design-choice-no-dirac-prior}
    SL$^2$ exhibits robustness against inconsistencies in SG$^2$ due to its aggregation of information from various bands. Therefore, \textit{it is a preferable design choice not to employ a Dirac explanation prior with SG$^2$}, as it eliminates inconsistencies, depicted in \cref{fig:example-sgsq-row}. Nevertheless, as demonstrated in the \cref{evaluation}, this approach may enhance the quantitative performance of explanations.
\end{remark}

\paragraph{Intuition.}
Considering \cref{eq:spectral-representation-gaussian}, we realize that the combination of perturbation and gradient explainer creates overlapping band-pass carrier waves that modulate the amplitude of the spectral density of the classifier. Therefore, a new summary statistic is required that can reflect a wide range of frequency bands.
We explicitly identified the spectral properties of the explanations, recognizing that not all frequencies are equally desired. Explanations with very high frequencies primarily attribute predictions to pixels. Therefore, we utilize an \textit{explanation prior} $p(\sigma)$ for a nuanced control of the spectral properties of explanations.
Although there are many possible alternatives for selecting an explanation prior, we used $p(\sigma)=\operatorname{Unif}(0,1)$ for simplicity.

\begin{definition}
    We define \textbf{ArgLens} (AL) as the point at which the SG$^2$ is maximized for pixel $x_i$. Formally defined as
    \begin{align}
        \label{eq:arglens}
        \operatorname{ArgLens}(x_i) = \arg\max_{\sigma} \left\{\operatorname{SG}^2(x_i)\right\}
    \end{align}
    where $\sigma$ is the hyperparameter of the perturbation kernel.
\end{definition}
AL shows the frequency band at which the pixel's attribution is maximized. Alternatively, it can be interpreted as the robustness of a pixel to noise.
We have visualized the AL for the Gaussian kernel in \cref{fig:frequency-explanations} and discussed the way it helps the interpretation of explanations in \cref{interesting-properties-of-sl-al}.

The aggregation of explanations from different frequencies is visualized in \cref{fig:frequency-explanations}. Moreover, the basic explanation methods are obtainable with a Dirac explanation prior $p(\sigma) = \delta(\sigma)$. Therefore, it can be used as a unified formulation for gradient-based explanation methods in spectral domain.

\subsubsection{Interesting Properties of SpectralLens and ArgLens.}
\label{interesting-properties-of-sl-al}
In the absence of a robust mathematical framework, the interpretation of explanations may be subjective. Thus, explanation methods must be grounded in solid theoretical principles. SL$^2$ addresses potential inconsistencies by aggregating attributions across frequency bands, ensuring fair representation of all pixels' contributions.

As SG$^2$ results in bands-pass filters (see \cref{sect:guassian-kernel}), its attributions reflect the changes in the decision boundary within a certain frequency band. 
Furthermore, AL reveals the frequency band at which the attribution is maximized. 
Hence, lower AL indicates that the attributions are likely governed by high-frequency components of the decision boundary, more sensitive to input noise, consequently less robust. 
An ideal classifier with smooth decision boundaries, \ie robust features, leads to high AL everywhere, with a high intensity in SL$^2$ on relevant pixels and low intensity on irrelevant pixels.

Given the different purposes of pixels within an image, the frequency of activation may vary and can be visualized via AL. Since SL$^2$ and AL offer complementary information, they should be interpreted together. This approach offers a deeper understanding compared to a simplistic perspective, as it has been shown by Rahaman \etal~\cite{rahaman_spectral_2019} that high-frequency regions emerge later in training, are more challenging to fit, and are prone to change. On the contrary, low-frequency components of the decision boundary manifest earlier in training, are easier to fit, and exhibit greater stability with respect to noise.

It is important to clarify that ``high-frequency'' refers to features considered high-frequency by the model, and not necessarily frequency across spatial dimensions (\eg width, height). 
Nonetheless, empirical findings demonstrate that high-frequency features in spatial domain are more likely to be identified as high frequency by the model (see \cref{appendix:example-outputs}).

\begin{figure}[t]
    \centering
    \begin{subfigure}[b]{0.24\linewidth}
    \includegraphics[width=\textwidth]{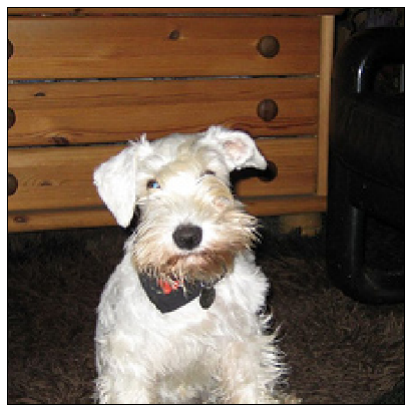}\\
    \includegraphics[width=\textwidth]{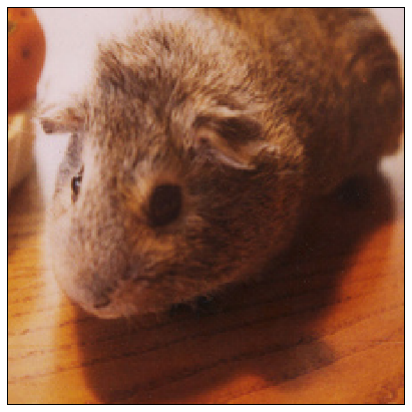}
    \caption{}
    \end{subfigure}
    \begin{subfigure}[b]{0.24\linewidth}
    \includegraphics[width=\textwidth]{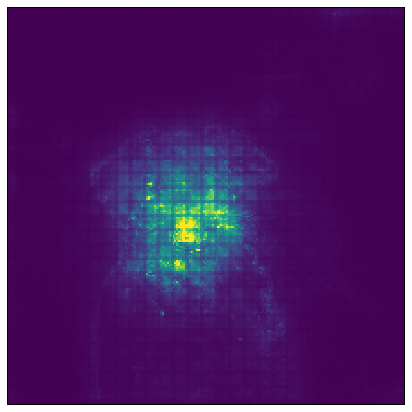}\\
    \includegraphics[width=\textwidth]{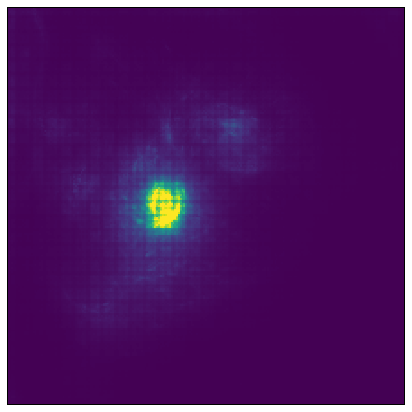}
    \caption{}
    \end{subfigure}
    \begin{subfigure}[b]{0.24\linewidth}
    \includegraphics[width=\textwidth]{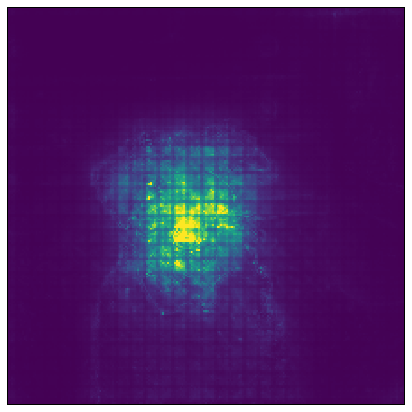}
    \includegraphics[width=\textwidth]{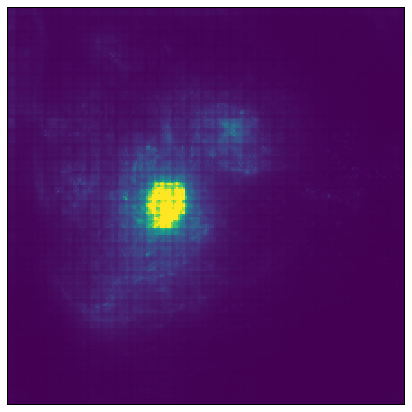}
    \caption{}
    \end{subfigure}
    \begin{subfigure}[b]{0.24\linewidth}
    \includegraphics[width=\textwidth]{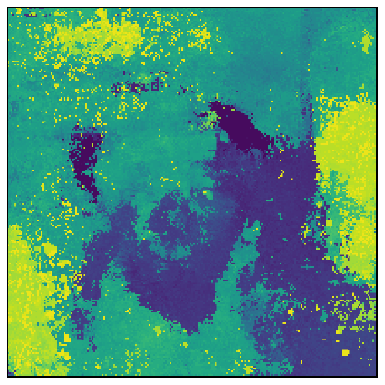}\\
    \includegraphics[width=\textwidth]{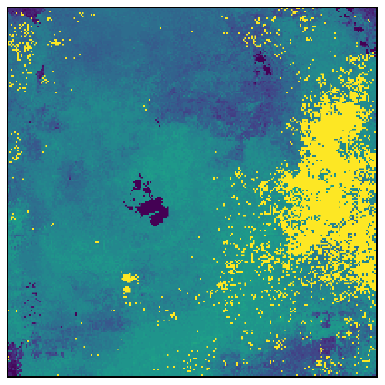}
    \caption{}
    \label{fig:frequency-explanations-d}
    \end{subfigure}
    \caption{This figure shows example visualizations of explanations generated by SG$^2$ implementing design decisions suggested in our work in \cref{design-choice-optimal-sigma,design-choice-no-dirac-prior}. (b) shows the SG$^2$ visualized at the maximum similarity according to \cref{fig:kc-similarity-dataset}, for this image $\sigma=0.5$. 
    (c) shows the SL attributions. 
    Lastly, (d) shows the frequencies obtained from AL (defined in \cref{eq:arglens}), which correspond to the perturbation level at which SG$^2$ is maximized. Since we are interested in frequencies rather than the actual perturbation levels, we visualize 2 stds above and below the mean of frequencies obtained by $\omega = 1/(1+\text{AL})$ (see \cref{appendix:example-outputs} for more examples). The heatmaps in (b) and (c) show how much the model is relying on the pixel, while (d) shows the frequency at which the feature's contribution maximizes. 
    In the provided examples, we can see that the model is relying on the nose (top) and eye (bottom) while the same region is of a lower frequency compared to the head. This can be interpreted as the model finding the nose and eye more stable features compared to the head. 
    }
    \label{fig:frequency-explanations}
\end{figure}

\section{Evaluation}
\label{evaluation}
Given the experimental discoveries that squared gradients lead to better results \cite{hooker_benchmark_2019}, we evaluate both scenarios for different hyperparameters and compare them with SL in \cref{tab:evaluations,tab:vit}, refer to \cref{reproducibility} for implementation details and reproducibility.
For the purposes of this study, we use pixel removal strategy, despite its problems \cite{hooker_benchmark_2019}. The spectral analysis of pixel removal strategy and possible solutions are deferred to future research.

We report single-pixel most-important-first (MoRF) insertion and deletion scores of the methods to assess the design choices previously proposed by our theoretical framework: (1) utilizing squared gradients instead of gradients in \cref{design-choice-squared}, (2) finding a standard perturbation scale in \cref{design-choice-optimal-sigma}, and (3) employing a uniform explanation prior instead of Dirac, stated in \cref{design-choice-no-dirac-prior} (see \cref{fig:imagenet-deletion-insertion-scores} for an ablation study of kernel variance and \cref{sec:ablation-erosion} for an ablation study of patch removal instead of single pixel removal).
We evaluated the suggested design choices on ImageNet\cite{deng_imagenet_2009} and Food101\cite{noauthor_food-101_nodate} in \cref{tab:evaluations}.
However, smaller datasets, such as CIFAR, are omitted because their resolution restricts the potential gap between high- and low-frequency features.

\begin{table}[t!]
\caption{
This table presents the assessment of the proposed design choices in \cref{design-choice-squared,design-choice-optimal-sigma,design-choice-no-dirac-prior} across different methods, highlighting statistically significant improvements with bold fonts.
We conduct t-tests comparing gradient methods to their squared-gradient counterparts to evaluate the design choice in \cref{design-choice-squared}. Furthermore, we conduct t-tests between SG$^2$ with the original heuristic and those ensuring consistency to evaluate design decisions in \cref{design-choice-optimal-sigma,design-choice-no-dirac-prior}. Overlines and underlines denote highest or lowest values within columns, respectively.
Although there are no squared versions of Occlusion Sensitivity (OS) \cite{ancona_towards_2018} and RISE \cite{petsiuk_rise_2018}, they are included as low-frequency baselines. 
While the dimensions of boolean masks utilized in OS and RISE can implicitly regulate the sampling variance (see \cref{appendix:unifying}), the exponential increase in the required number of samples in high-frequency explanations renders them less practical.
SG scores follow the heuristic of the SmoothGrad paper \cite{smilkov_smoothgrad_2017} \ie $\sigma=\left(\max(\bsx)-\min(\bsx)\right)0.1$. 
XIG indicates the input multiplication in the original implementation of IntegratedGrad \cite{sundararajan_axiomatic_2017}, while IG does not apply such transformation. 
Rankings for methods using gradients are obtained from negative gradients, including RISE and Occlusion Sensitivity, according to \cref{eq:general-spectral-representation-gradients}.
Furthermore, SG$^2_{\text{Opt.}}$ uses an optimized kernel per image and SL$^2$ uses a uniform explanation prior.
Comparing SG$^2$ and SG$^2_{\text{Opt.}}$, we can see that SG$^2_{\text{Opt.}}$ can adapt to the statistics of the Food101. 
Therefore, SG$^2_{\text{Opt.}}$, while not subject to inconsistencies (shown in \cref{fig:example-sgsq-row}), can improve the performance due to its adaptability. 
Hence, we can ensure the consistency of SG$^2$ without compromising its performance.
Also, note that the input multiplication enhances visual quality, but degrades the performance.
}
\label{tab:evaluations}
\centering
\begin{tabular}{llcc|cc} %|cc
\toprule
&& \multicolumn{2}{c}{ImageNet} & \multicolumn{2}{c}{Food101} 
% & \multicolumn{2}{c}{CBIS-DDSM} 
\\ 
\cmidrule(lr){3-4}
\cmidrule(lr){5-6}
% \cmidrule(lr){7-8}
& & Insertion($\uparrow$) & Deletion($\downarrow$) & Insertion($\uparrow$) & Deletion($\downarrow$) 
% & Insertion($\uparrow$) & Deletion($\downarrow$) 
\\ 
\cmidrule(lr){3-6}
\parbox[t]{5mm}{\multirow{5}{*}{\rotatebox[origin=c]{90}{Gradients}}} 
& OS & 
$0.384\pm0.008$ & $0.237\pm0.010$ 
& $0.378\pm0.011$ & $0.209\pm0.009$ 
% & $0.490\pm0.011$ & $0.456\pm0.013$
\\
& RISE & 
$\overline{0.494}\pm0.014$ & $0.174\pm0.007$ 
& $\overline{0.410} \pm0.013$ & $0.198\pm0.007$ 
% & $\overline{0.505}\pm0.012$ & $0.404\pm0.013$
\\ 
& SG &
$0.212\pm0.010$ & $0.179\pm0.009$ 
& $0.179\pm0.008$ & $\underline{0.137}\pm0.007$ 
% & $0.226\pm0.003$ & $0.236\pm0.004$
\\
& XIG &
$0.208\pm0.009$ & $0.204\pm0.008$ 
& $0.101\pm0.007$ & $0.177\pm0.007$ 
% & $0.226\pm0.004$ & $0.223\pm0.005$ 
\\ 
& IG & 
$0.201\pm0.010$ & $0.182\pm0.009$ 
& $0.145\pm0.006$ & $\underline{0.131}\pm0.005$ 
% & $0.225\pm0.003$ & $\underline{0.214}\pm0.006$
\\  
\midrule
\parbox[t]{5mm}{\multirow{5}{*}{\rotatebox[origin=c]{90}{Gradients$^2$}}}
& SG$^2$ & 
$\mathbf{0.465}\pm0.013$ & $\underline{\mathbf{0.157}}\pm0.007$ 
& $\mathbf{0.371}\pm0.012$ & $0.162\pm0.006$ 
% & $\mathbf{0.378}\pm0.006$ & $0.411\pm0.014$
\\
& XIG$^2$ &
$\mathbf{0.334}\pm0.014$ & $\mathbf{0.187}\pm0.008$ 
& $\mathbf{0.204}\pm0.010$ & $0.263\pm0.007$ 
% & $\mathbf{0.391}\pm0.006$ & $0.364\pm0.005$
\\ 
& IG$^2$ & 
$\mathbf{0.340}\pm0.012$ & $\mathbf{0.168}\pm0.007$ 
& $\mathbf{0.320}\pm0.009$ & $\underline{0.135}\pm0.005$ 
% & $\mathbf{0.354}\pm0.009$ & $0.342\pm0.008$
\\ 
\cmidrule{2-6}
& $\text{SG}^2_{\text{Opt.}}$ &
$\overline{\mathbf{0.491}}\pm0.015$ & $0.187\pm0.007$ 
& $\mathbf{0.394}\pm0.013$ & $0.167\pm0.006$ 
% & $\mathbf{0.396}\pm0.008$ & $0.443\pm0.012$
\\
& SL$^2$ & 
$\overline{\mathbf{0.493}}\pm0.016$ & $0.169\pm0.007$ 
& $\mathbf{0.395}\pm0.014$ & $0.157\pm0.007$
% & $\mathbf{0.389}\pm0.004$ & $0.398\pm0.010$
\\
\bottomrule
\end{tabular}
\end{table}

\subsubsection{Limitations of Our Framework.} 
The choice of independent Gaussian perturbation is limited due to its inability to incorporate prior knowledge about the input space or data manifold characteristics, and other data modalities such as graphs and time series. 
However, preferring simpler analytical forms within our framework limits the alternative choices. 
Efficiency is another challenge compared to basic methods, as performing back-props at various noise scales is costly. 
Moreover, our attribution of frequencies in visualizations, such as SL, may lack the desired generalizability across data points, as high-frequency regions in one image may correspond to low-frequency regions in another image (see \cref{appendix:example-outputs}). 
Identifying generalizable attributions across images remains a subject for future investigation.

\section{Related Works}
\label{related_works}
\subsubsection{Probabilistic and Spectral View of Explanations.} 
Despite sporadic attempts to analyze explanations from a probabilistic view \cite{chen_learning_2018,wang_probabilistic_2021}, this view is more common in the literature around the uncertainty quantification of explanations \cite{slack_reliable_2021,hill_explanation_2022,marx_but_2023}.
Our work is closely related to Covert \etal~\cite{covert_explaining_2022}, if their work is seen from a probabilistic perspective.
Aside from the works around the spectral view of deep networks \cite{rahaman_spectral_2019,tancik_fourier_2020,basri_frequency_2020,cao_towards_2021,benbarka_seeing_2022}, as of our current knowledge, there is no work adopting a spectral view of explanations along pixels' value.
Nonetheless, there are works that build on a spectral view along spatial dimensions \cite{kolek_explaining_2023,kolek_cartoon_2022}, extending an optimization-based definition of explanations \cite{fong_understanding_2019}.

\subsubsection{Unifying Views to Explanations.}
Covert \etal \cite{covert_explaining_2022} propose an information theoretic framework that unifies several explanation methods, focusing on three design choices: (1) feature removal method, (2) which model behavior to analyze, and (3) summary statistics.
Our probabilistic framework, being similar to Covert's, offers a distinct viewpoint relevant to our spectral analysis. 
It can be used as a unifying view of explanations (see \cref{appendix:unifying}), inheriting from Covert's framework.
Notably, our work aligns with the broader context of research under finding unifying representations of explanations \cite{agarwal_towards_2021,lundberg_unified_2017,han_which_2022}.

\subsubsection{Input-space or Function-space Perturbations.}
Methods such as NoiseGrad \cite{bykov_noisegrad_2022}, Grad-CAM \cite{selvaraju_grad-cam_2020}, RectifiedGrad \cite{kim_why_2019}, Layer-wise Relevance Propagation \cite{bach_pixel-wise_2015}, DeepLift \cite{shrikumar_not_2017}, and even Sharpness-Aware Minimization (SAM) \cite{foret_sharpness-aware_2021} employ function space perturbations. Utilizing such distributions offers data modality independence, whereas input space perturbation remains agnostic about the model architecture. While acknowledging the duality between function and input space distributions, our focus on explicit input space distributions excludes these methods from our study. Although we recognize the utility of integrating function space perturbation into our mathematical model, we defer this aspect to future investigations.
It can be noted, however, that extending this work to function space perturbation will connect it to the literature on double descent and smooth interpolation of samples \cite{nakkiran_deep_2019,gamba_lipschitz_2023}.

\section{Conclusion}
In this study, we deepen understanding of pixel attribution methodologies by rigorously analyzing their theoretical underpinnings, revealing a common spectral bias. Our analysis supports the experimental finding of recent works regarding the efficacy of using squared gradients. Moreover, we demonstrate how spectral properties can lead to inconsistent explanations and propose two design choices informed by our theoretical framework to address these inconsistencies: standardizing hyperparameters based on cosine similarity between the perturbation kernel and classifier, and aggregating explanations with a given prior. Despite the effectiveness of our approach, we acknowledge limitations and suggest avenues for further research. Lastly, we assess the suggested design choices using deletion and insertion scores.
\clearpage

\section*{Acknowledgements}
%Should I add this line to "Disclosure of interest" when submitting via Springer submission tool?
We would like to thank \href{https://www.kth.se/profile/glma}{Giovanni Luca Marchetti} for an early review of our work and his kind feedback. This project is partially supported by Region Stockholm through MedTechLabs, and  \href{https://wasp-sweden.org/}{Wallenberg AI, Autonomous Systems and Software Program (WASP)} funded by the Knut and Alice Wallenberg Foundation. Scientific computation was enabled by the supercomputing resource Berzelius, provided by the National Supercomputer Centre at Linköping University and the Knut and Alice Wallenberg foundation. 
 
% ---- Bibliography ----
%
% BibTeX users should specify bibliography style 'splncs04'.
% References will then be sorted and formatted in the correct style.
%
\bibliographystyle{splncs04}
\bibliography{references}
\newpage
\appendix
\section{A Spectral Representation of Explanations}
\label{appendix:derivations}
In this section, we derive spectral representation for explanations analogous to analysis of a linear time-invariant systems in spectral signal processing, and separate the analysis by the power of the gradients.
In \cref{sect:guassian-kernel}, we have discussed the emergence of band-pass filters that cause the explanations to be inconsistent given different perturbation scales. 
% This phenomenon is visualized for a hypothetical input space in \cref{fig:vector-space}.
% visualization as a vector space
% \begin{figure*}
%     \centering
%     \begin{subfigure}[b]{0.24\textwidth}
%     \includegraphics[width=\textwidth]{figs/misc_vis/vector_field_0.pdf}
%     \caption{}
%     \end{subfigure}
%     \begin{subfigure}[b]{0.24\textwidth}
%     \includegraphics[width=\textwidth]{figs/misc_vis/vector_field_1.pdf}
%     \caption{}
%     \end{subfigure}
%     \begin{subfigure}[b]{0.24\textwidth}
%     \includegraphics[width=\textwidth]{figs/misc_vis/vector_field_2.pdf}
%     \caption{}
%     \end{subfigure}
%     \begin{subfigure}[b]{0.24\textwidth}
%     \includegraphics[width=\textwidth]{figs/misc_vis/vector_field_combined.pdf}
%     \caption{}
%     \end{subfigure}
%     \caption{
%     A conceptual visualization of the gradient inconsistencies is presented in this figure in a hypothetical two-dimensional input space. As shown, increasing the perturbation level leads to a more regular vector space of $\nabla f$ (from b to c), whereas in case of zero noise, the gradient space will rapidly change from one point to another (a). Consequently, the introduction of noise into an input sample induces alterations in the gradient's direction. As shown in the illustration (d), zoomed in on a region, the changes in the direction of the gradient would cause inconsistent explanations.
%     }
%     \label{fig:vector-space}
% \end{figure*}

\subsection{Spectral Representation of Prediction}
\label{spectral-representation-prediction}
To analyze explanations from the spectral viewpoint, we first need a version of $f$ that acts on a signal $p(\tbsx)$, which we denote by $F(p(\tbsx))$.
We assume linearity of response of $F$ with respect to the input signal, which lets us denote the response of the signal by $\int F(\delta_{\tbsx}(\bsx)) p(\tbsx) d \tbsx$.
The linearity of response of a deep network is indeed a strong assumption, and it should be pointed out that the validity of this assumption is tightly coupled with the concentration of the perturbation distribution around the vicinity of the sample being explained.
As we are analysing the networks in small neighborhoods of a sample we assume that the Fourier transforms exist and our function has a linear response with respect to its signal.
Finally, assuming that the impulse response can be obtained by a forward pass \ie $f(\tbsx) = F(\delta_{\tbsx}(\bsx))$, we can write the response of the system $F$ to the signal $p(\tbsx)$ by
\begin{align}
    F(p(\tbsx)) &= \int F(\delta_{\tbsx}(\bsx)) p(\tbsx) d \tbsx\\
    &= \int f(\tbsx) p(\tbsx) d \tbsx \\
    &= \int \hf(\bsomega) \widehat{p}(\bsomega) d \bsomega
\end{align}
where the last equality follows from the Plancherel theorem \cite{reiter_classical_2000}.
The result of the application of a model to the signal (or perturbation distribution) can be written as a convolution $f*p$, therefore, one may use the ``perturbation kernel'' and ``perturbation distribution'' interchangeably.
This representation helps us to perform a spectral analysis of the explanations presented in the next section.

\subsection{Gradients in Spectral Domain}
\label{appendix:derivations-gradient}
To find the spectral representation of gradients, we use the Plancherel theorem \cite{reiter_classical_2000}, for $\E_{p(\tbsx)}$ as follows
\begin{align}
\E_{p(\tbsx)}[\nabla f] & = \int \nabla f(\tbsx) p(\tbsx)d \tbsx\\
 &=\int i2\pi\bsomega \hf(\bsomega) \hp(\bsomega) d \bsomega
\end{align}
note that the actual pixel values cause a shift in Fourier domain, which can be ignored when assuming the origin is set at the sample being explained. 
Let us denote the real even and imaginary odd components  of a function $\hf$ by $\hf_E$ and, $\hf_O$ respectively (note that as we are using real valued functions, other components are zero). 
Then we can further simplify the last integral and gain interesting insights about the gradients and the information each perturbation kernel extracts
\begin{align}
\E_{p(\tbsx)}[\nabla f] &=\int i2\pi\bsomega \hf(\bsomega) \hp(\bsomega) d \bsomega\\
=& 2\pi\left[\int i\bsomega (\hf_E(\bsomega)+i\hf_O(\bsomega)) (\hp_E(\bsomega)+i\hp_O(\bsomega)) d \bsomega\right]\\
=& 2i\pi\underbrace{\left[\int \bsomega\hf_E(\bsomega)\hp_E(\bsomega)d\bsomega - \int \bsomega\hf_O(\bsomega)\hp_O(\bsomega) d \bsomega\right]}_{\text{odd functions over symmetric bounds} = 0}\\
&-2\pi\left[\int \bsomega\hf_E(\bsomega)\hp_O(\bsomega) d \bsomega + \int \bsomega\hf_O(\bsomega)\hp_E(\bsomega) d \bsomega \right]\\
=& -2\pi\left[\int \bsomega\hf_E(\bsomega)\hp_O(\bsomega) d \bsomega + \int \bsomega\hf_O(\bsomega)\hp_E(\bsomega) d \bsomega\right]\\
=& -2\pi\left[\int \bsomega(\hf_E(\bsomega)\hp_O(\bsomega) + \hf_O(\bsomega)\hp_E(\bsomega)) d \bsomega\right]
\end{align}

\subsection{Squared Gradients in Spectral Domain}
\label{appendix:derivations-sq-gradient}
Let function $g(\bsx) = \int \nabla f(\tbsx) \sqrt{p(\bsx-\tbsx)}d \tbsx$,
expanding the autocorrelation function $\operatorname{R}_{g}(\bstau)$ for $g$, we have
\begin{align}
    \operatorname{R}_{g}(\bstau) &= \int g(\bsx+\bstau) \overline{g(\bsx)} d \bsx \\
    &= \iint \nabla f(\tbsx_1) \sqrt{p(\bsx+\bstau-\tbsx_1)} d \tbsx_1  \overline{\int \nabla f(\tbsx_2) \sqrt{p(\bsx-\tbsx_2)} d \tbsx_2} d\bsx\\
    &= \iint \nabla f(\tbsx_1) \overline{\nabla f(\tbsx_2)} \left(\int \sqrt{p(\bsx+\bstau-\tbsx_1)} \overline{\sqrt{p(\bsx-\tbsx_2)}} d\bsx \right) d \tbsx_1 d \tbsx_2\\
    &= \iint \nabla f(\tbsx_1) \overline{\nabla f(\tbsx_2)} \operatorname{R}_{\sqrt{p}}(\tbsx_1-\tbsx_2+\bstau) d \tbsx_1 d \tbsx_2
\end{align}
where overbars show complex conjugate.
This can establish a connection between the autocorrelation of $g(\bsx)$ and that of the perturbation kernel $p(\tbsx - \bsx)$. 
Moreover, from the Wiener-Khinchin theorem \cite{cohen_generalization_1998}, we know that $\operatorname{S}_g(\bsomega) = \widehat{\operatorname{R}}_g(\bsomega)$, i.e. the Fourier transform of autocorrelation function of $g$ is equal to its spectral density.
By further expanding the expression, one can establish a connection between the spectral density of $g(\bsx)$ and $p(\tbsx - \bsx)$, as follows:
\begin{align}
    \operatorname{S}_{g}(\bsomega) &= \int \operatorname{R}_{g}(\bstau) \psi(\bsomega,\bstau) d\bstau\\
    &=\iiint \nabla f(\tbsx_1) \overline{\nabla f(\tbsx_2)} \operatorname{R}_{\sqrt{p}}(\tbsx_1-\tbsx_2+\bstau) \psi(\bsomega,\bstau) d \tbsx_1 d \tbsx_2 d\bstau\\
    &=\iint \nabla f(\tbsx_1) \overline{\nabla f(\tbsx_2)} \left(\int\operatorname{R}_{\sqrt{p}}(\tbsx_1-\tbsx_2+\bstau) \psi(\bsomega,\bstau) d\bstau\right) d \tbsx_1 d \tbsx_2\\
    &=\iint \nabla f(\tbsx_1) \overline{\nabla f(\tbsx_2)} \left( \psi(\bsomega,\tbsx_1-\tbsx_2) \operatorname{S}_{\sqrt{p}}(\bsomega)\right) d \tbsx_1 d \tbsx_2\\
    &=\left(\int \nabla f(\tbsx_1)\psi(\bsomega,\tbsx_1)d \tbsx_1\right) \overline{\left(\int \nabla f(\tbsx_2)\psi(\bsomega,\tbsx_2) d \tbsx_2\right)} \operatorname{S}_{\sqrt{p}}(\bsomega)\\
    &=\left|i2\pi\bsomega \hat{f}(\bsomega)\right|^2 \operatorname{S}_{\sqrt{p}}(\bsomega)\\
    &=4\pi^2 \|\bsomega\|^2 \operatorname{S}_f(\bsomega) \operatorname{S}_{\sqrt{p}}(\bsomega)
\end{align}
The major difference with the conventional spectral analysis of signals \cite{stoica_spectral_2005}, is the term multiplied by the spectra of $f$ which is the result of the gradient operator. Finally, one can write the expected value of squared gradients in spectral domain as
\begin{align}
    \E_{p(\tbsx)}\left[(\nabla f(\tbsx))^2\right]&= \int (\nabla f(\tbsx))^2 \left(\sqrt{p(\tbsx)}\right)^2 d \tbsx\\
    &=\operatorname{R}_{g}(0)\\
    &=\int \operatorname{S}_g(\bsomega) \psi(-\bsomega,0) d \bsomega\\
    &=\int \operatorname{S}_g(\bsomega) d \bsomega\\
    &=\int 4\pi^2 \|\bsomega\|^2 \operatorname{S}_f(\bsomega) \operatorname{S}_{\sqrt{p}}(\bsomega) d\bsomega
\end{align}
A counterintuitive result here is that any Dirac perturbation kernel would have the same form, which is the result of assuming that the origin is centered around the example being explained.

\section{Theorems and Proofs}
\setcounter{theorem}{0}
\setcounter{proposition}{0}
\label{proofs}
This section contains a more formal approach towards the theorems and propositions stated in this work.
\begin{theorem}
    Any perturbation kernel $p(\tbsx)$ mitigates the attribution of high-frequency features; simply put, \textbf{any perturbation is a low-pass filter}.
\end{theorem}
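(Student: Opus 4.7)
\medskip

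\noindent\textbf{Proof Proposal.}
The plan is to reduce the statement to the Riemann–Lebesgue lemma. The key observation is that a perturbation kernel $p(\tbsx)$ is, by the probabilistic setup of \cref{probabilistic-representation}, a probability density: $p(\tbsx)\ge 0$ and $\int p(\tbsx)\,d\tbsx = 1$. In particular $p\in L^{1}(\mathbb{R}^{n})$ with $\|p\|_{1}=1$, so its Fourier transform $\hp(\bsomega)$ is well-defined, uniformly continuous, and bounded.

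First, I would record the trivial upper bound
\begin{equation*}
|\hp(\bsomega)| \;=\; \Bigl|\int p(\tbsx)\,\psi(\bsomega,\tbsx)\,d\tbsx\Bigr| \;\le\; \int p(\tbsx)\,d\tbsx \;=\; 1 \;=\; \hp(\mathbf{0}),
\end{equation*}
which shows that the zero-frequency (DC) component dominates and already hints at the low-pass character of $\hp$. Next, I would invoke the Riemann–Lebesgue lemma: since $p\in L^{1}(\mathbb{R}^{n})$, we have $\hp(\bsomega)\to 0$ as $\|\bsomega\|\to\infty$. This is precisely the hypothesis needed to match the formal definition introduced just before the theorem: for every prescribed $M>0$, there exists a radius $\bsomega^{*}$ such that $|\hp(\bsomega)|<M$ whenever $\|\bsomega\|>\|\bsomega^{*}\|$. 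Hence $\hp$ is a low-pass filter in the sense of the paper.

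The main obstacle is conceptual rather than technical: the statement as written is false for a Dirac kernel $p=\delta_{\bsx}$, since then $\hp\equiv 1$ and no $\bsomega^{*}$ exists once $M\le 1$. This is consistent with the discussion of VanillaGrad in \cref{sect:vanilla-grad} (no perturbation, no mitigation), but it means the theorem should be read as applying to absolutely continuous perturbation kernels, i.e.\ genuine $L^{1}$ densities rather than singular measures. I would therefore state this regularity assumption explicitly at the start of the proof, and then the Riemann–Lebesgue step goes through unchanged.

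Finally, to connect the analytical statement back to the attribution claim, I would close by substituting the bound on $\hp$ into the spectral representations \cref{eq:general-spectral-representation-gradients} and \cref{eq:general-spectral-representation-sqgradients}: since $\hp$ (equivalently $\operatorname{S}_{\sqrt{p}}$, via the same argument applied to $\sqrt{p}\in L^{2}\cap L^{1}$ for any reasonable kernel) decays at high $\|\bsomega\|$, the high-frequency contributions of $\hf$ or $\operatorname{S}_{f}$ are suppressed relative to the unperturbed (Dirac) case, which is the precise sense in which ``high-frequency features are mitigated.''
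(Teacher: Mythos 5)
Your proposal is correct and follows essentially the same route as the paper's own proof: both reduce the claim to the Riemann--Lebesgue lemma applied to an absolutely integrable ($L^1$) kernel, using the bound $\|\hp(\bsomega)\|\le 1$ from the characteristic-function property. Your explicit remark that the Dirac kernel must be excluded is a point the paper handles implicitly by assuming $p\in L^1$, so there is no substantive difference.
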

\begin{proof}
    \label{prop:mitigation-proof}
    Let $\hp(\bsomega)$ denote the FT of the absolutely integrable perturbation kernel $p(\tbsx)\in L^1$. It follows directly from the Riemann-Lebesgue lemma \cite{serov_riemannlebesgue_2017}, that for every $M\in(0,1)$\footnote{The $\|\hp(\bsomega)\|\leq 1$ inequality follows from the properties of the characteristic function of any probability distribution.} there exists a $\bsomega^*$ that $\forall \|\bsomega\|~>~\|\bsomega^*\|$ we have
    $\|\hp(\bsomega)\|~<~M$.
\end{proof}

\begin{theorem}
    \label{prop:amplification-appendix}
    The gradient operator amplifies the attribution of high-frequency features; simply put, \textbf{the gradient operator is a high-pass filter}.
\end{theorem}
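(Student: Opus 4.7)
The plan is to identify the frequency response associated with the gradient operator and then verify the definition of a high-pass filter directly. Recall the standard Fourier derivative identity: if $f$ admits the Fourier representation $f(\bsx) = \int \hf(\bsomega) \psi(\bsomega,\bsx)\, d\bsomega$ used throughout the paper, then $\nabla f$ admits the representation with spectrum $i2\pi\bsomega\,\hf(\bsomega)$. So the gradient operator, viewed as a linear filter acting on $\hf$, has (vector-valued) transfer function $\widehat{m}(\bsomega) = i2\pi\bsomega$, with magnitude $\|\widehat{m}(\bsomega)\| = 2\pi\|\bsomega\|$. This is the object to which the definition of low/high-pass from the main text must be applied.

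\textbf{Key steps.} First, I would justify the derivative identity by differentiating under the integral sign in the Fourier expansion of $f$ (under the same regularity assumptions cited in the paper for the Fourier representation), yielding $\widehat{\nabla f}(\bsomega) = i2\pi\bsomega\,\hf(\bsomega)$ and hence $\widehat{m}(\bsomega) = i2\pi\bsomega$. Second, I would invoke the paper's definition: to show $\widehat{m}$ is high-pass, I must show that for every $M > 0$ there exists $\bsomega^*$ such that $\|\widehat{m}(\bsomega)\| < M$ whenever $\|\bsomega\| < \|\bsomega^*\|$. Third, given $M > 0$, I would simply exhibit such a threshold by choosing any $\bsomega^*$ with $\|\bsomega^*\| = M/(2\pi)$; then for any $\|\bsomega\| < \|\bsomega^*\|$, we have $\|\widehat{m}(\bsomega)\| = 2\pi\|\bsomega\| < 2\pi\|\bsomega^*\| = M$, which is precisely the high-pass condition. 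Finally, I would remark that $\|\widehat{m}\|$ is monotonically increasing in $\|\bsomega\|$ and unbounded, so not only is the small-frequency content suppressed but the large-frequency content is strictly amplified, which matches the informal statement that the gradient operator \emph{amplifies the attribution of high-frequency features}.

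\textbf{Anticipated obstacles.} There is essentially no computational difficulty: the result is a one-line consequence of the Fourier derivative rule together with the paper's own definition of high-pass. The only mildly delicate point is bookkeeping in the multivariate setting — the transfer function $i2\pi\bsomega$ is vector-valued (one component per coordinate of $\nabla f$), so I should be explicit about which norm is being used when writing $\|\widehat{m}(\bsomega)\|$, and note that either interpreting the gradient componentwise or as a vector-valued filter leads to the same conclusion because each component $2\pi|\omega_j|$ is bounded by $2\pi\|\bsomega\|$. A secondary subtlety is that the paper's definition of high-pass is somewhat permissive (it only requires smallness near the origin, not unbounded growth at infinity); I would note that the gradient satisfies the stronger property, which reinforces the intuition behind \cref{prop:amplification} and the band-pass picture developed in \cref{spectral-properties-explainer-perturbation}.
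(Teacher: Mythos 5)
Your proposal is correct and follows essentially the same route as the paper's own proof: identify the gradient's frequency response as the multiplier $i2\pi\bsomega$ and verify the high-pass definition directly by choosing a threshold $\bsomega^*$ proportional to $M$. You are in fact slightly more careful than the paper, which picks $\bsomega^*=\sqrt{M}\mathbf{1}$ and glosses over the $2\pi$ factor and the dimension-dependence of $\|\mathbf{1}\|$; your explicit choice $\|\bsomega^*\|=M/(2\pi)$ and your remark on the vector-valued bookkeeping tighten the same argument without changing its substance.
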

\begin{proof}
    \label{prop:amplification-proof}
    Given $M>0$, let $\bsomega^* = \sqrt{M}\mathbf{1}$. Then for any $ \|\bsomega\|<\|\bsomega^*\|$ we have $\|\bsomega\|<M$, which shows that gradient acts as a high-pass filter in spectral domain.
\end{proof}

\begin{proposition}
    \label{rashomon-effect-appendix}
    The SmoothGrad-Squared pixel attribution can contradict itself by changing the hyperparameters. Simply,
    \textbf{the Rashomon effect can occur in the SmoothGrad-Squared explanations}.
\end{proposition}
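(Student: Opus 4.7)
The plan is to exhibit two pixels whose SG$^2$ ordering reverses as $\sigma$ varies, using the explicit spectral formula in \cref{eq:spectral-representation-gaussian}. Writing the per-pixel attribution as
\begin{equation*}
A_i(\sigma) \;=\; \int \|\bsomega\|^2\, \operatorname{S}_{f,i}(\bsomega)\, e^{-8\pi^2\sigma^2\|\bsomega\|^2}\, d\bsomega ,
\end{equation*}
the modulating factor $K_\sigma(\bsomega) := \|\bsomega\|^2 e^{-8\pi^2\sigma^2\|\bsomega\|^2}$ is band-pass with a peak at $\|\bsomega\|=1/(2\pi\sigma\sqrt{2})$; that is, the band center moves monotonically from high to low frequencies as $\sigma$ grows. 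A contradiction in rankings will arise as soon as two pixels have per-pixel PSDs whose mass is concentrated in well-separated frequency bands, because sweeping $\sigma$ will successively select one band and then the other.

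First, I would fix two well-separated frequencies $\bsomega_1$ (low) and $\bsomega_2$ (high) and construct a function $f$ so that the per-pixel PSDs $\operatorname{S}_{f,1}$ and $\operatorname{S}_{f,2}$ are essentially supported on $\{\pm\bsomega_1\}$ and $\{\pm\bsomega_2\}$ respectively, with comparable total mass. A concrete instance is a two-coordinate test function $f(\bsx)=\alpha_1\sin(2\pi\omega_1 x_1)+\alpha_2\sin(2\pi\omega_2 x_2)+R(\bsx)$ with smooth remainder $R$, whose PSDs along coordinates $1$ and $2$ are (approximately) pairs of Dirac masses at $\pm\omega_1$ and $\pm\omega_2$. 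Plugging into the formula, $A_i(\sigma)$ collapses, up to a controlled error, to $\alpha_i^2 K_\sigma(\omega_i)$.

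Next, I would pick two perturbation scales. Choosing $\sigma_1$ with $1/(2\pi\sigma_1\sqrt{2})\approx \|\bsomega_1\|$ maximizes $K_{\sigma_1}$ near the low frequency, so $A_1(\sigma_1)>A_2(\sigma_1)$ whenever $\alpha_1,\alpha_2$ are balanced. Symmetrically, choosing $\sigma_2$ with $1/(2\pi\sigma_2\sqrt{2})\approx \|\bsomega_2\|$ yields $A_1(\sigma_2)<A_2(\sigma_2)$. Thus the SG$^2$ ranking of pixels $1$ and $2$ flips between $\sigma_1$ and $\sigma_2$, which is the Rashomon phenomenon asserted in the proposition. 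By continuity of $A_i$ in $\sigma$, Bolzano's theorem additionally yields an intermediate $\sigma$ at which the two attributions agree, reinforcing the instability.

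The main obstacle is legitimacy of the construction: we must know that the exhibited spectral profile can be realized by a \emph{classifier} $f=\log p(y\mid\bsx)$, not merely an abstract function. This is resolved by recalling that the spectral analysis is pixelwise and local, so any smooth $f$ with the prescribed per-coordinate PSD can be embedded in a softmax/log-sum-exp model without affecting local gradients. A secondary technicality is that narrow-band bumps (rather than exact Diracs) introduce a small perturbation error in $A_i(\sigma)$; this is controlled by shrinking the bump widths, and continuity of $A_i(\sigma)$ in both $\sigma$ and the PSD width guarantees the strict inequalities survive. No other machinery beyond \cref{eq:spectral-representation-gaussian} and elementary properties of $K_\sigma$ is required.
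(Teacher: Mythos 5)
Your proposal is correct and follows essentially the same route as the paper's own proof: both instantiate \cref{eq:spectral-representation-gaussian} with per-pixel spectra concentrated at two well-separated frequencies (the paper assumes exact Dirac masses $\hf_i(\omega)^2=\delta_{\omega_i}(\omega)$, you use narrow-band bumps realized by a sinusoidal test function) and then observe that the band-pass factor $\|\bsomega\|^2 e^{-8\pi^2\sigma^2\|\bsomega\|^2}$ selects one frequency or the other as $\sigma$ varies, reversing the ranking. The only substantive difference is presentational: the paper solves for the explicit crossing scale $|\Bar{\omega}|$ separating the two orderings, whereas you argue via the location of the kernel's peak and add a (welcome but inessential) discussion of realizability as a log-probability.
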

\begin{proof}
    \label{rashomon-effect-proof}
    For simplicity, assume that the total contribution of all pixels to a prediction in the image $\bsx$ is equal. 
    Formally, we have $\forall i;\int\hf_i(\omega)^2 d\omega=1$, where $\hf_i(\omega)^2$ represents the contribution of the pixel $x_i$ to the prediction at frequency $\omega$. 
    Furthermore, assume that the features, although having a total equal contribution, would contribute at different frequencies. 
    For simplicity, we assume that $\hf_i(\omega)^2 = \delta_{\omega_i}(\omega)$ for $\omega\geq0$. 
    Consequently, following \cref{eq:spectral-representation-gaussian}, the contribution of pixel $x_i$ at noise level $\sigma$ is given by
    \begin{align}
        \restr{\E_{p(\tbsx)}\left[(\nabla f(\tbsx))^2\right]}{x_i}
        &\propto \int \omega^2 \delta_{\omega_i}(\omega) e^{-8\pi^2\sigma^2\omega^2} d\omega\\
        &= \omega_i^2 e^{-8\pi^2\sigma^2\omega_i^2}
    \end{align}
    For brevity, denote $\restr{\E_{p(\tbsx)}\left[(\nabla f(\tbsx))^2\right]}{x_i}=\operatorname{A}_\sigma(x_i)$, and, without loss of generality, assume $|\omega_i|<|\omega_j|$ for some $i$ and $j$. It can be seen that the attribution of the high-frequency feature is higher, \ie $\operatorname{A}_\sigma(x_i)<\operatorname{A}_\sigma(x_j)$ if $\sigma~<~|\Bar{\omega}|$, 
    and lower, \ie $\operatorname{A}_\sigma(x_i)>\operatorname{A}_\sigma(x_j)$ if $\sigma~>~|\Bar{\omega}|$, where $|\Bar{\omega}|\propto\frac{\ln(|\omega_j|)-\ln(|\omega_i|)}{\omega_j^2-\omega_i^2}$. 
    
    This shows that a pixel that does not appear to be important with one hyperparameter may turn out to be important with another hyperparameter, therefore, leading to inconsistent explanations.
\end{proof}

\begin{proposition}
    The gradient sign in IG is affected by the perturbation level. 
    \label{ig-sign-appendix}
\end{proposition}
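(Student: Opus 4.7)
The plan is to show that $\E_{p(\tbsx)}[\nabla f(\tbsx)]$, regarded as a continuous function of the perturbation scale $\sigma$, can take values of opposite sign. I start from the spectral form of IG given in the excerpt,
\[
\E_{p(\tbsx)}[\nabla f(\tbsx)] \propto \int i\,\hf(\bsomega)\,\sin(\pi\sigma(\bss-\bsx)\bsomega)\,e^{i\pi\bsomega\bs{r}}\,d\bsomega,
\]
with $\bs{r}(\sigma)=1+2\bsx/(\sigma(\bss-\bsx))$. First I would decompose $\hf=\hf_E+i\hf_O$ and $e^{i\pi\bsomega\bs{r}}=\cos(\pi\bsomega\bs{r})+i\sin(\pi\bsomega\bs{r})$, expand the product, and discard those summands whose integrands are odd in $\bsomega$ over symmetric bounds (the same bookkeeping used to derive \cref{eq:general-spectral-representation-gradients}). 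What remains is a real-valued integral in which $\sigma$ enters only through the two oscillatory factors $\sin(\pi\sigma(\bss-\bsx)\bsomega)$ and $e^{i\pi\bsomega\bs{r}(\sigma)}$.

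Next, to certify a sign change, I would exhibit a concrete test $f$ whose Fourier transform is sharply concentrated around a pair of frequencies $\pm\omega_0$ — for instance, a narrow Gaussian bump pair so that the Fourier representation stays in $L^1$. Substituting this into the reduced integral collapses it, up to controllable error, to a closed form of the shape $C\sin(\pi\sigma(\bss-\bsx)\omega_0+\phi(\sigma))$ with $C\neq 0$. The sine factor traverses both strictly positive and strictly negative values as $\sigma$ ranges across any interval that covers one of its periods, and the leftover phase $\phi(\sigma)$ is nearly constant because the frequency support is localized; hence the closed-form expression — and therefore $\E_{p(\tbsx)}[\nabla f(\tbsx)]$ — changes sign with $\sigma$. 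Continuity of the integral in $\sigma$ and the intermediate-value argument already referenced in the remark following the proposition then give the conclusion.

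As a complementary, non-constructive route I would observe that $I(\sigma):=\E_{p(\tbsx)}[\nabla f(\tbsx)]$ decays to $0$ as $\sigma\to\infty$ by the Riemann--Lebesgue lemma applied to the sine-modulated integrand, while for small $\sigma>0$ a first-order expansion of $\sin(\pi\sigma(\bss-\bsx)\bsomega)$ gives a nonzero leading term proportional to the odd part of $f$ about $\bsx$. A continuous real-valued function that is nonzero at small $\sigma$ and vanishes at infinity cannot generically preserve a single sign, and the explicit test function above certifies that crossings actually occur.

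The main obstacle will be disentangling the coupled $\sigma$-dependence of $\sin(\pi\sigma(\bss-\bsx)\bsomega)$ and $\bs{r}(\sigma)$: a careless argument might attribute the sign flip entirely to the sine factor and miss cancellations from the phase $e^{i\pi\bsomega\bs{r}}$, which is singular as $\sigma\to 0$. Concentrating the test spectrum around $\pm\omega_0$ decouples the two scales — the phase becomes a slow modulation at the single frequency $\omega_0$, while the sine factor oscillates in $\sigma$ — so the surviving sine term controls the sign and the proposition follows.
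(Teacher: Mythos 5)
Your proposal is correct and follows essentially the same route as the paper's proof: the paper likewise concentrates the spectrum of the pixel's contribution at a symmetric frequency pair $\pm\omega_i$ (using exact Dirac masses rather than your narrow Gaussian bumps), collapses the integral in \cref{eq:spectral-representation-rect} to the product $-\sin(\pi(s_i-x_i)\sigma\omega_i)\sin(\pi\omega_i r_i)$, and reads off the sign flip from the $\sigma$-dependent sine factor. Your explicit handling of the coupled $\sigma$-dependence through $\bs{r}(\sigma)$ is a point the paper's proof passes over more quickly, but it does not change the argument.
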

\begin{proof}
    \label{ig-sign-appendix-proof}
    For simplicity, assume that the total contribution of all pixels to a prediction in the image $\bsx$ are equal and even. 
    Formally, assume $\forall i;\int \hf_i(\omega) d\omega=1$, where $\hf_i(\omega)$ represents the contribution of pixel $x_i$ to the prediction at frequency $\omega$. 
    Furthermore, for simplicity, let us assume that $\hf_i(\omega) = \frac{1}{2}\delta_{\omega_i}(|\omega|)$ where $\omega_i>0$. 
    Consequently, following \cref{eq:spectral-representation-rect}, the contribution of pixel $x_i$ at perturbation level $\sigma$ is given by
    \begin{align}
        \restr{\E_{p(\tbsx)}[\nabla f(\tbsx)]}{x_i} &\propto \int \frac{1}{2} \delta_{\omega_i}(|\omega|) \sin(\pi(s_i-x_i)\sigma\omega) e^{i\pi\omega r_i} d\omega\\
        &= -\sin(\pi(s_i-x_i)\sigma\omega_i)\sin(\pi\omega_i r_i)
    \end{align}
    This shows that, depending on the value of $\sigma$ in the final term, the pixel attributions can be negative or positive, although $\hf_i(\omega)>0$ for all $\omega$.
\end{proof}

Note that in the proofs above, we have assumed the rect kernel on a single pixel for simplicity. Therefore, it is not meant to be advocated for measuring truthfulness on a single pixel.
Nonetheless, the change of sign can occur with other kernels, also on many pixels (see \cref{fig:example-sg-row})
\section{On the Generality of the Probabilistic View}
\label{appendix:unifying}
We have selected a few perturbation-based explanation methods to translate into our formulation, whose Fourier transform is easier to analyze.
Without loss of generality, we may use scalar notation for simplicity.

\begin{itemize}
    \item \textbf{Dirac Kernel:}
    A baseline, known as \textbf{VanillaGrad}, to which many have compared their work, is when explainer is set to $\nabla$ and the perturbation distribution is set to $p(\tbsx) = \delta_{\bsx}$. 
    Moreover, \textbf{Input$*$Grad} \cite{shrikumar_not_2017}, is the case where the explainer is set to $\bsx\nabla$.
    In this case, the PSD of the perturbation distribution is $|S_{\sqrt{p}}(\bsomega)|^2 = \|2\pi\bsomega\|^2$, which recovers \textit{the scaled spectrum} of $\hf$, which causes amplification of high-frequency features of the input signal. 
    
    \item \textbf{Dirac Comb Kernel:} In \textbf{RISE} \cite{petsiuk_rise_2018} and \textbf{Occlusion Sensitivity} \cite{ancona_towards_2018}, a low-resolution boolean mask of size $M \ll N$, is sampled from a distribution, which is set to a set of Bernoulli trials $\bsalpha\sim\{\operatorname{Ber}(h_i)\}_{i\in [M]}$ in RISE and a one-hot categorical $\bsalpha\sim \operatorname{OHCat}(H,M)$ in Occlusion Sensitivity, where $h_i$ is the $i$'th element of the success probability vector $H$. 
    The perturbed images are obtained by $\tbsx = (1-\bsalpha)\bsx+\bsalpha\bss$ and the explainer is defined by $\bsalpha(\bsalpha-1) \Delta$. 
    It should be emphasized that, in this derivation, we are not considering the random shifts and spatial dependencies in the derivation of RISE for simplicity.
    Although the use of finite difference might not be obvious in the original papers, one can show that the prediction-based explainer is essentially a scaled finite difference up to a constant; see \cref{appendix:finite_difference} for details. 
    The perturbation distribution for this method can be identified by a Dirac Comb $p(\tbsx)=\sum_j \delta(\tbsx_j-\bsx_j)+\delta(\tbsx_j-\bss_j)$, where $j$ is on the superpixels \cite{rahman_applications_2011}. 
    The PSD of this kernel is of the form
    $\hp(\bsomega) = \sum_j e^{2i\pi\bsx_j}+e^{2i\pi\bss_j}$.
    Closely related to the Dirac Comb distribution is when, instead of masking pixels, a Gaussian filter is applied to the image \cite{fong_interpretable_2017}, unfortunately, the hand-designed processes further complicate the theoretical study of their results. 
    
    \item \textbf{Rect Kernel:}
    In \textbf{IntegratedGrad} \cite{sundararajan_axiomatic_2017}, the explainer is set to $(\bsx-\bss)\nabla$, in which $\bss$ is a fixed baseline, so $(\bsx-\bss)$ acts like a constant scaling in time domain. The perturbed images are set to $\tbsx = (1-\sigma)\bsx+\sigma\bss$ where $\sigma\sim\operatorname{Unif}(0,1)$. Moreover, with a change of variables, one can show that the perturbation distribution is set to $p(\tbsx)= \frac{1}{\sigma|\bss-\bsx|}\operatorname{Rect}(\frac{\tbsx-\bsx}{\sigma(\bss-\bsx)}-\frac{1}{2})$. 
    Intuitively, the sampling distribution forms a rectangular pulse from the baseline to the image.
    The chosen baseline affects the timescale and translation of the rectangular pulse.
    We analyze this kernel with more detail in \cref{rect-kernel-analysis}.
    
    \item \textbf{Gaussian Kernel:}
    Smilkov et al. \cite{smilkov_smoothgrad_2017} propose \textbf{SmoothGrad}, where they set explainer to $\nabla$ and distribution to $p(\tbsx) = \mathcal{N}(\bsx,\sigma^2)$ and suggest a heuristic for finding an optimal value for $\sigma$.
    We show the spectral properties of this distribution in \cref{sect:guassian-kernel}. 

\end{itemize}

% cache
%
% $\operatorname{Comb}(\tbsx) = \sum_j\bsalpha_j\delta(\tbsx_j-\bsx_j)+(1-\bsalpha_j)\delta(\tbsx_j-\bss_j)$ where $j$
% Intuitively, the perturbation distribution for this method is a sum of Rect distributions on voxels in the corners of a hypercube that are the superpixels either being baseline or image.
% \footnote{The mathematical expression is a bit ugly to expand, but noting the independence of superpixels, it can be easily derived $\frac{1}{2^M}\prod^M_j [\exp(-i2\pi b_j \omega_j)+\exp(-i2\pi x_j \omega_j)]$ where $j$ is over $M$ superpixels. 
% The effect of this mask in spectral domain is not clear, so we expect that the explainer removing granularity of explanations.}.
\section{Finite Difference Explainers}
\label{appendix:finite_difference}
To define an explainer, some of the explanation methods use the prediction of the model on a given sample and its perturbations \cite{petsiuk_rise_2018}. We show that their explainer can be written in the form of a finite difference up to a constant with respect to the perturbation distribution. Assume $\bsalpha$ be the multiplicative perturbation mask sampled from some perturbation distribution with expected value of $\bs{M}$. Then we have
\begin{align}
    \operatorname{Prediction Explainer}(f,\bsx) &= \E_{p(\tbsx)}[\bsalpha f(\tbsx)]\\ 
    &= \E_{p(\tbsx)}[\bsalpha f(\tbsx)-\bsalpha f(\bsx)+\bsalpha f(\bsx)]\\
    &= \E_{p(\tbsx)}[\bsalpha(\tbsx-\bsx) \left(\frac{f(\tbsx)- f(\bsx)}{\tbsx-\bsx}\right)+\bsalpha f(\bsx)]\\
    &= \E_{p(\tbsx)}[\bsalpha(\tbsx-\bsx) \Delta f(\tbsx)+\bsalpha f(\bsx)]\\
    &= \E_{p(\tbsx)}[\bsalpha(\tbsx-\bsx) \Delta f(\tbsx)]+f(\bsx)\E_{p(\tbsx)}[\bsalpha]\\
    &= \E_{p(\tbsx)}[\bsalpha(\tbsx-\bsx) \Delta f(\tbsx)]+\bs{M}f(\bsx)
\end{align}.
Note that the subtraction does not change the visualizations due to the normalization.
Hence, after normalization, one can write $\operatorname{Prediction Explainer}(f,\bsx)=\E_{p(\tbsx)}[\bsalpha (\tbsx-\bsx) \Delta f(\tbsx)]$, which means that the explainer is set to $\bsalpha (\tbsx-\bsx) \Delta$.
This demonstrates that these methods are also estimating the gradient, and the low-pass filter is typically attained by having a very low resolution for the perturbation mask.

Another way to understand the low-pass filter is comparing the derived finite difference by the definition of gradient. In the implementations of finite difference methods, we use expectation over perturbed samples, which is equivalent to assuming a constant denominator in the definition of finite difference. This means that small perturbations to the input are diminished (compared to gradient, where the denominator approaches to infinity for small perturbations).
\section{Notes on Reproducibility}
\label{reproducibility}
It is essential to recognize that Explainable Artificial Intelligence (XAI) should be integrated into the broader AI development pipeline. Consequently, the design of XAI methods must account for several logical considerations inherent to AI pipeline construction. A critical yet often overlooked aspect is the assumption that the label is unknown during the inference. This assumption should be consistently maintained when generating explanations. As a result, methods that rely on the label for explanation computation are likely to be less effective in practical applications. That being said, in all of our implementations, we have used the most probable class in the prediction as the class that needs to be explained. This, in turn, can make explanations less useful if the model has a low accuracy. Furthermore, we considered using log-softmax of the model output, as suggested by the recent work \cite{wang_why_2022}.

\subsection{Choosing a Proper Perturbation Cutoff}
An underexplored subject regarding the practical details of generating explanations pertains to selecting a proper cutoff for perturbation scale. This is crucial for ensuring both reproducibility and a methodical approach to implementing explanation methods. Our experimental findings suggest that monitoring the trends in predictive entropy of a classifier across varying noise scales can serve as a heuristic. We have found that when working with Gaussian noise, we typically observe an anomalous trend in the predictive entropy of the classifier, as illustrated in \cref{fig:predictive-entropy}. 
It is worth noting that the values presented for perturbation levels in the main text are relative to this maximum cutoff \ie $\sigma=1.0$, refers to one unit of maximum perturbation level.

We attribute this anomalous trend in predictive entropy to two primary factors. Firstly, the noise level scales beyond the range that is present in the training data, rendering it out-of-distribution. Secondly, the scale of logits is affected by the scale of the input. In probability space, \ie values after the softmax, the large scale of logits results in an overconfident prediction.

\begin{figure*}[h]
    \centering
    \includegraphics[width=\textwidth]{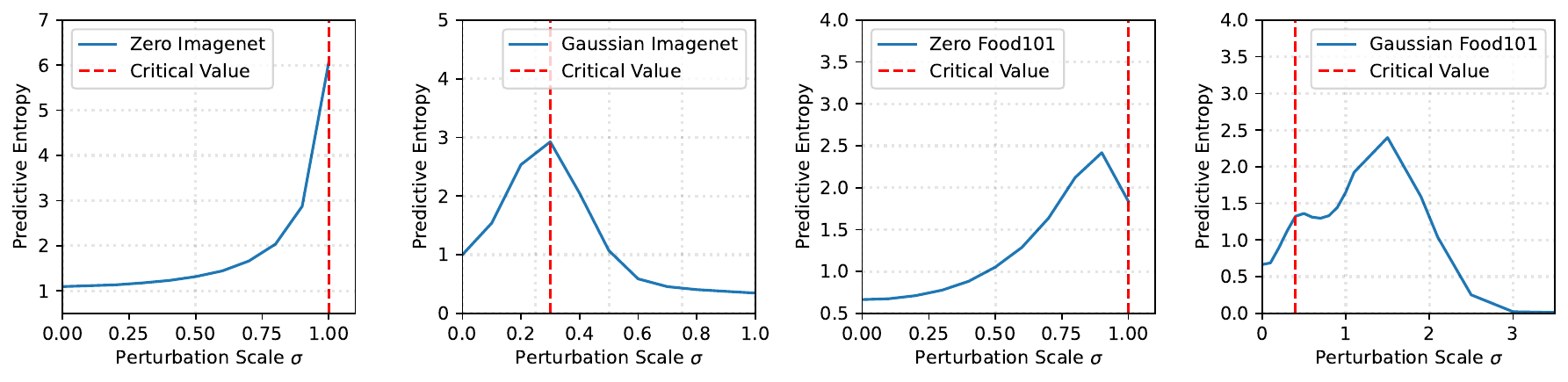}
    \caption{
    This figure shows the ResNet50's predictive entropy at varying perturbation scales on two datasets. As can be seen in the plots, when the perturbation distribution is set to Gaussian, it increases gradually and reaches to a point where an anomalous trend occurs. The anomaly shows itself as a decrease in the predictive entropy in ImageNet and plateauing in Food101, despite our expectation that it should increase to its highest theoretical value. Since after the anomalous trend the visual quality of explanations degrade, we have selected the noise scale cutoff right before the anomaly starts, shown with red dashed lines.
    }
    \label{fig:predictive-entropy}
\end{figure*}

\subsection{Controlling Signal-to-Noise Ratio}
\label{sec:signal-to-noise-ratio}
In the context of our probabilistic framework, as detailed in \cref{appendix:unifying}, various methods employ different noise-signal combination strategies to obtain perturbed input images, visualized in \cref{fig:combination-fn}. In our implementation for reproducing SmoothGrad, we adopt an additive combination method. However, for SpectralLens and measuring the cosine similarity (as discussed in \cref{aggregation-of-information,kc-similarity-subsection}), we employ a convex combination strategy.
Notably, experimental findings indicate that the model's sensitivity lies more in the signal-to-noise ratio rather than the specific combination function employed. Nonetheless, using an additive combination method may result in input scaling and is unsuitable for identifying an optimal perturbation scale in \cref{kc-similarity-subsection}.

\subsection{Implementation Details} 
We have used ResNet50\cite{he_deep_2015} and ViT\cite{dosovitskiy_image_2021} on ImageNet and Food101 to generate explanations for corrupted images with an explanation prior $p(\sigma)~=~\operatorname{Unif}(0,1)$.
(see \cref{tab:vit} for ViT-B/16\footnote{Model architecture with pretrained weights are used from \url{https://github.com/google-research/vision_transformer}} architecture). 
The sampling pipeline was implemented in JAX for improved runtime efficiency\footnote{Our JAX implementation is approximately 10 times faster than its PyTorch counterpart, albeit with increased pipeline complexity, available at \url{https://github.com/Amir-Mehrpanah/an_explanation_model/tree/berzelius}. The complexity is due to the end-to-end compilation of the sampling; however, the results are reproducible with less complexity in PyTorch.}.
We used Monte Carlo estimation of expectations while sampling as many samples as the change in the estimates is higher than $5e-3$, leading to $256$ samples per image on average.
We used maximum of $8$K samples for RISE and Occlusion, which led to a longer runtime.
For determining the proper Gaussian noise scale we performed grid search on the scales while monitoring the predictive entropy, and we selected the perturbation scale at which the entropy plateaus see \cref{fig:predictive-entropy} for more details.
Example results are presented in \cref{fig:frequency-explanations} (additional examples are given in \cref{appendix:example-outputs}). We leave the analysis of existing evaluation methods from a spectral point of view for future work.

\section{An Ablation Study of Architecture}
\label{architectures-appendix}
Although we used ResNet50 due to its prevalence in the main text, it should be mentioned that we are agnostic of architecture in the theoretical analysis. To verify this claim, we have provided an evaluation of the stated design decisions on \texttt{ViT-B/16} in \cref{tab:vit}.  
\begin{table}
\caption{
The table presents an evaluation of the design choices, with \texttt{ViT-B/16} on 10K images of ImageNet validation set. As can be seen, the trend is very similar to that of the \cref{tab:evaluations}.
}
\label{tab:vit}
\centering
\begin{tabular}{llcc}
\toprule
& & Insertion($\uparrow$) & Deletion($\downarrow$)\\ 
\cmidrule(lr){3-4}
\parbox[t]{3mm}{\multirow{5}{*}{\rotatebox[origin=c]{90}{Gradients}}} 
& OS & 
$0.384\pm0.017$ & $0.331\pm0.012$ 
\\
& RISE & 
$\overline{0.412}\pm0.016$ & $0.298\pm0.014$
\\ 
& SG &
$0.230\pm0.010$ & $0.213\pm0.010$ 
\\
& XIG &
$0.227\pm0.009$ & $0.239\pm0.011$
\\ 
& IG & 
$0.227\pm0.009$ & $0.210\pm0.011$  
\\  
\midrule
\parbox[t]{5mm}{\multirow{5}{*}{\rotatebox[origin=c]{90}{Gradients$^2$}}}
& SG$^2$ & 
$\mathbf{0.375}\pm0.017$ & $0.209\pm0.011$ 
\\
& XIG$^2$ &
$\mathbf{0.396}\pm0.018$ & $\mathbf{0.211}\pm0.009$ 
\\ 
& IG$^2$ & 
$\mathbf{0.401}\pm0.018$ & $\mathbf{0.186}\pm0.010$ 
\\ 
\cmidrule{2-4}
& $\text{SG}^2_{\text{Opt.}}$ &
$\overline{\mathbf{0.412}}\pm0.018$ & $\mathbf{0.170}\pm0.008$ 
\\
& SL$^2$ & 
$\overline{\mathbf{0.411}}\pm0.020$ & $\underline{\mathbf{0.149}}\pm0.008$ 
\\
\bottomrule
\end{tabular}
\end{table}

\begin{figure}
    \centering
    \includegraphics[width=0.4\textwidth,angle=90]{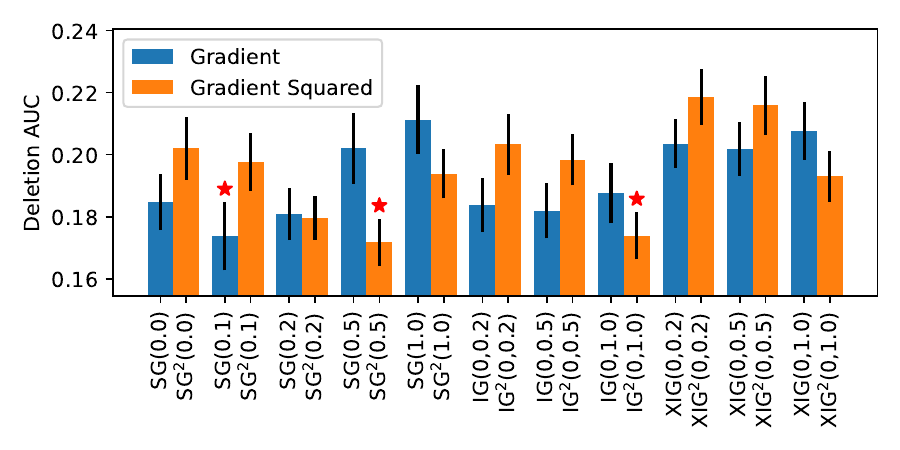}
    \includegraphics[width=0.4\textwidth,angle=90]{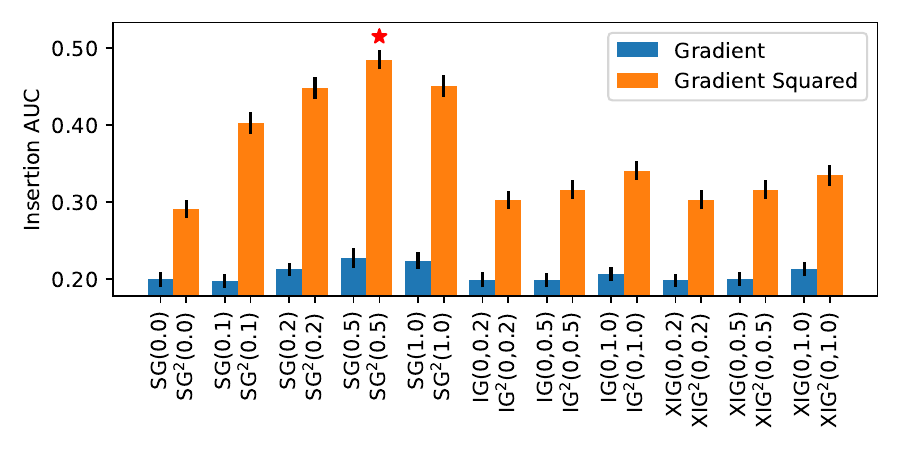}
    \caption{
    This figure illustrates the deletion and insertion scores produced by various explanation techniques, such as SG, IG, XIG. 
    The evaluation was carried out using ResNet50 on a subset of the ImageNet validation set, consisting of 10 samples of 1k size each (total of 10k images).
    Note that, unlike IG, XIG utilizes multiplication of the input image for explanation. 
    Furthermore, SG(0.0) corresponds to the VanillaGradient method. 
    Higher insertion scores suggest better performance, while lower is better in deletion scores. 
    All kernels in this figure employ a Dirac explanation prior, shown in parentheses. 
    We have measured the scores for IG and SG with varying values of perturbation levels. 
    The red star indicates an insignificant p-value obtained from a t-test comparing the most effective method.
    % (shared with the red stars in \cref{fig:imagenet-deletion-insertion-scores}).
    As can be seen from the results, squaring gradients consistently leads to better performance, with only a few exceptions in the deletion scores.
    It is important to note that the most effective method among those employing a Dirac explanation prior can be identified based on the highest similarity (see \cref{fig:kc-similarity-dataset}). 
    }
    \label{fig:imagenet-deletion-insertion-scores}
\end{figure}

% \begin{figure}
%     \centering
%     \includegraphics[width=0.49\textwidth]{figs/7_auc_dirac_deletion.pdf}
%     \includegraphics[width=0.49\textwidth]{figs/7_auc_dirac_insertion.pdf}
%     \caption{
% This figure illustrates the deletion and insertion scores produced by aggregation of bands which we termed SL. 
%     We used Resnet50 on 10 samples of size 1k of validation set of ImageNet (10k images in total).
%     We used different explanation priors to quantify the effect of aggregation of bands.
%     Higher insertion scores suggest better performance, while lower is better in deletion scores. 
%     Additionally, we have incorporated scores for XSL, where the input image is multiplied by the ranking generated by the explanation method.
%     The red star indicates an insignificant p-value obtained from a t-test comparing the most effective explanation method, while the red pentagon denotes the best-performing method.
%     Notably, the aggregation of bands yields improved explanations and mitigates potential inconsistencies in hyperparameter selection.
%     However, the results indicate that multiplying the input image by the ranking produced by an explanation method, while visually appealing, degrades performance.
%     Overall, squaring gradients consistently outperforms other methods, with only one exception noted in the deletion scores.
%     }
%     \label{fig:imagenet-deletion-insertion-scores-dirac}
% \end{figure}

\begin{figure*}
    \centering
    \includegraphics[width=0.6\textwidth]{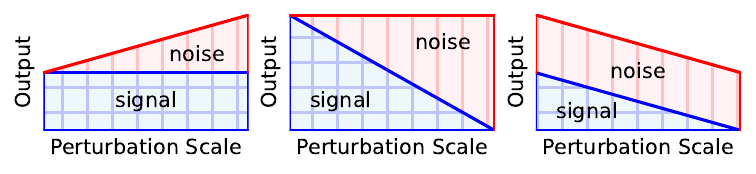}
    \caption{
    % add more about combinations mathematically and make it explicit where it appears
    This figure illustrates a visual representation of different combination functions. From left to right we have, additive combination, convex combination, and (signal) damping combination. It is noteworthy that different methods may employ distinct combination functions. For example, RISE and occlusion techniques utilize a convex combination function, IntegratedGrad applies a signal damping combination, and SmoothGrad employs an additive combination approach.
    }
    \label{fig:combination-fn}
\end{figure*}

\section{An Ablation Study of Pixel-based Evaluation}
\label{sec:ablation-erosion}
As the Fourier transform in our work, is \textit{per pixel} (1D), and \textit{not} along spatial dims (2D), which is more common, it expands on input dimensions, due to the linearity of the FT operator. The function's FT \ie $\widehat{f}$, is the connecting piece of explanation pixels.
Therefore, while large/small structures are spatially low/high-frequency, it is \textit{not} necessarily the case in our analysis. 

The purpose of \cref{tab:evaluations} presenting AUC of single-pixel \textit{MoRF} is to empirically validate \textit{our} theoretical findings, which are on pixel values, and not to rank methods for their efficacy. That is why we believe single-pixel evaluation is the right choice to provide empirical evidence in our case. Importantly, as stated in before, the frequency bias is not only in explanations, but also in evaluation. Hence, interestingly, the effect of using blobs for evaluation (instead of pixels) is \textit{another} low-pass filter on gradients, along spatial dimensions, removing frequencies that \textit{could have been removed} in explanation step by a \textit{different} perturbation kernel.

Yet, due to general concerns about pixel evaluation of explanations, we included this section to perform an ablation study on the kernel size of an erosion, which is performed after selecting the top X\% of pixels in an explanation, presented in \cref{tab:ablation-erosion}.
It is noteworthy that because of erosion kernel, the actual removed area is usually higher than the dedicated budget which is visualized in \cref{fig:ablation_kernel,fig:ablation_kernel_area}. Therefore, for a fair comparison, we use the actual removed area (instead of the dedicated budget) when computing MoRF AUC.

\begin{table}
\caption{
This table presents an ablation study of the kernel size of erosion kernel that is used for removing patches instead of single pixels when computing the MoRF AUC. For this table, we used ResNet50 on 10K images from ImageNet validation set. 
As can be seen, the trends justified by our theory still hold with different kernel sizes. Although, the theory in this work, are based on pixels due to linearity of the Fourier operator.
The methods with explicit spatial constraints exhibit minimal changes after the erosion kernel (see \cref{fig:ablation_kernel_area,fig:ablation_kernel} for details).
}
\label{tab:ablation-erosion}
\centering
\resizebox{\textwidth}{!}{%
\begin{tabular}{llcc|cc|cc|cc} %
\toprule
&& \multicolumn{2}{c}{($1\times1$) Erosion} & \multicolumn{2}{c}{($3\times3$) Erosion} 
& \multicolumn{2}{c}{($5\times5$) Erosion}
& \multicolumn{2}{c}{($7\times7$) Erosion}
\\ 
\cmidrule(lr){3-4}
\cmidrule(lr){5-6}
\cmidrule(lr){7-8}
\cmidrule(lr){9-10}
& & Insertion($\uparrow$) & Deletion($\downarrow$) & Insertion($\uparrow$) & Deletion($\downarrow$) 
& Insertion($\uparrow$) & Deletion($\downarrow$) 
& Insertion($\uparrow$) & Deletion($\downarrow$) 
\\ 
\cmidrule(lr){3-10}
\parbox[t]{5mm}{\multirow{5}{*}{\rotatebox[origin=c]{90}{Gradients}}} 
& OS & 
  $0.384\pm0.008$ & $0.237\pm0.010$ 
& $0.385\pm0.008$ & $0.236\pm0.009$ 
& $0.387\pm0.009$ & $0.235\pm0.009$
& $0.389\pm0.008$ & $0.234\pm0.010$
\\
& RISE & 
  $\overline{0.494}\pm0.014$ & $0.174\pm0.007$ 
& $\overline{0.494}\pm0.013$ & $0.174\pm0.006$ 
& $\overline{0.495}\pm0.014$ & $0.175\pm0.006$
& $\overline{0.496}\pm0.014$ & $0.176\pm0.007$
\\ 
& SG &
  $0.212\pm0.010$ & $0.179\pm0.009$ 
& $0.350\pm0.013$ & $\underline{0.137}\pm0.005$ 
& $0.384\pm0.013$ & $\underline{0.149}\pm0.004$
& $0.408\pm0.012$ & $0.175\pm0.004$
\\
& XIG &
  $0.208\pm0.009$ & $0.204\pm0.008$ 
& $0.285\pm0.011$ & $0.204\pm0.007$ 
& $0.322\pm0.013$ & $0.199\pm0.006$ 
& $0.357\pm0.014$ & $0.211\pm0.005$
\\ 
& IG &
  $0.201\pm0.010$ & $0.182\pm0.009$ 
& $0.301\pm0.011$ & $0.177\pm0.007$ 
& $0.328\pm0.012$ & $0.187\pm0.007$
& $0.360\pm0.012$ & $0.212\pm0.007  $
\\  
\midrule
\parbox[t]{5mm}{\multirow{5}{*}{\rotatebox[origin=c]{90}{Gradients$^2$}}}
& SG$^2$ & 
  $\mathbf{0.465}\pm0.013$ & $\underline{\mathbf{0.157}}\pm0.007$ 
& $\mathbf{0.459}\pm0.012$ & $0.157\pm0.007$ 
& $\mathbf{0.455}\pm0.012$ & $0.155\pm0.006$
& $\mathbf{0.458}\pm0.011$ & $\underline{\mathbf{0.157}}\pm0.005$
\\
& XIG$^2$ &
  $\mathbf{0.334}\pm0.014$ & $\mathbf{0.187}\pm0.008$ 
& $\mathbf{0.358}\pm0.012$ & $\mathbf{0.185}\pm0.008$ 
& $\mathbf{0.358}\pm0.012$ & $\mathbf{0.185}\pm0.005$
& $0.373\pm0.012$ & $\mathbf{0.193}\pm0.004$
\\ 
& IG$^2$ & 
  $\mathbf{0.340}\pm0.012$ & $\mathbf{0.168}\pm0.007$  
& $\mathbf{0.358}\pm0.013$ & $0.177\pm0.006$ 
& $\mathbf{0.357}\pm0.013$ & $0.183\pm0.005$
& $0.371\pm0.012$ & $\mathbf{0.195}\pm0.005$
\\ 
\cmidrule{2-10}
& $\text{SG}^2_{\text{Opt.}}$ &
  $\overline{\mathbf{0.491}}\pm0.015$ & $0.187\pm0.007$ 
& $\mathbf{0.473}\pm0.013$ & $0.197\pm0.007$
& $\mathbf{0.472}\pm0.013$ & $0.195\pm0.007$ 
& $\mathbf{0.472}\pm0.012$ & $0.197\pm0.007$
\\
& SL$^2$ & 
  $\overline{\mathbf{0.493}}\pm0.016$ & $0.169\pm0.007$ 
& $\mathbf{0.468}\pm0.013$ & $0.191\pm0.008$ 
& $0.459\pm0.013$ & $0.195\pm0.008$
& $0.459\pm0.013$ & $0.200\pm0.008$
\\
\bottomrule
\end{tabular}
}
\end{table}

\begin{figure*}
    \centering
    \includegraphics[width=0.19\textwidth]{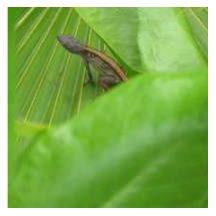}
    \includegraphics[width=0.19\textwidth]{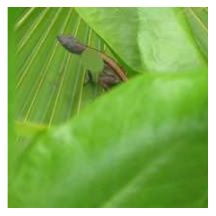}
    \includegraphics[width=0.19\textwidth]{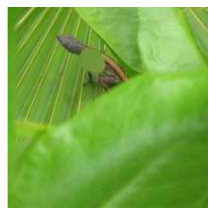}
    \includegraphics[width=0.19\textwidth]{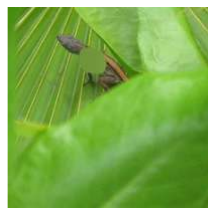}
    \includegraphics[width=0.19\textwidth]{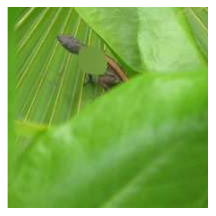}
    \includegraphics[width=0.19\textwidth]{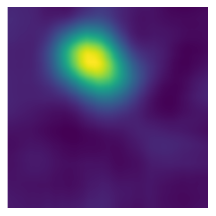}
    \includegraphics[width=0.19\textwidth]{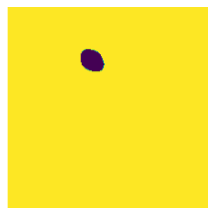}
    \includegraphics[width=0.19\textwidth]{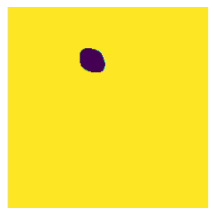}
    \includegraphics[width=0.19\textwidth]{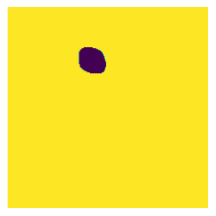}
    \includegraphics[width=0.19\textwidth]{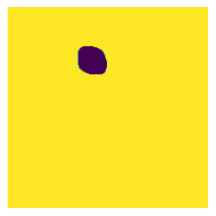}
    \includegraphics[width=0.19\textwidth]{figs/kernel_ablation/Original.pdf}
    \includegraphics[width=0.19\textwidth]{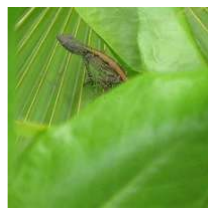}
    \includegraphics[width=0.19\textwidth]{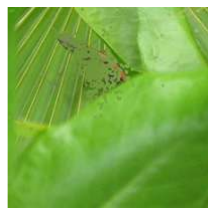}
    \includegraphics[width=0.19\textwidth]{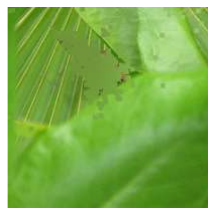}
    \includegraphics[width=0.19\textwidth]{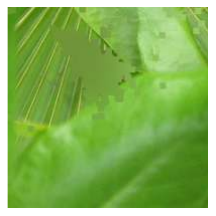}
    \begin{subfigure}[b]{0.19\textwidth}
    \includegraphics[width=\textwidth]{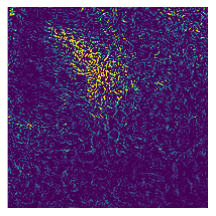}
         \caption{}
    \end{subfigure}
    \begin{subfigure}[b]{0.19\textwidth}
    \includegraphics[width=\textwidth]{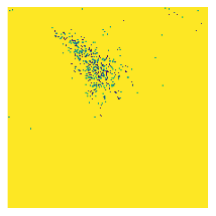}
         \caption{}
    \end{subfigure}
    \begin{subfigure}[b]{0.19\textwidth}
    \includegraphics[width=\textwidth]{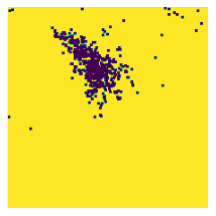}
         \caption{}
    \end{subfigure}
    \begin{subfigure}[b]{0.19\textwidth}
    \includegraphics[width=\textwidth]{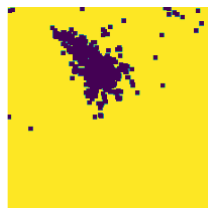}
         \caption{}
    \end{subfigure}
    \begin{subfigure}[b]{0.19\textwidth}
    \includegraphics[width=\textwidth]{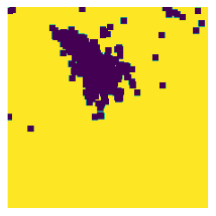}
         \caption{}
    \end{subfigure}
    \caption{This figure visualizes the ablation study on pixel-based evaluation with erosion kernel of varying sizes. On column (a) we have shown the original image and its saliency (RISE top and VG bottom), and on other columns, we have shown the image after removing the patches, corresponding to the removal mask below. The kernel size ranges from (1$\times$1) in (b) (aka pixel-based), to (3$\times$3) in (c), (5$\times$5) in (d), and (7$\times$7) in (e). As the analysis is done based on pixels, we used pixel-based evaluation (b) in the main text. But we can apply erosion to remove or insert patches instead of pixels while computing the deletion or insertion score.
    For a fair comparison, it is important to note that for this visualization, we set the removal area budget to the top 1\% of the saliency. This budget is achieved in pixel-based removal with a 1\% removed area in (b), however due to the erosion, the removed area for columns (c) to (e) increases to 5\%, 8\%, and 11\% for VG, respectively, see \cref{fig:ablation_kernel_area} for more details.}
    \label{fig:ablation_kernel}
\end{figure*}

\begin{figure*}
    \centering
    \includegraphics[width=\textwidth]{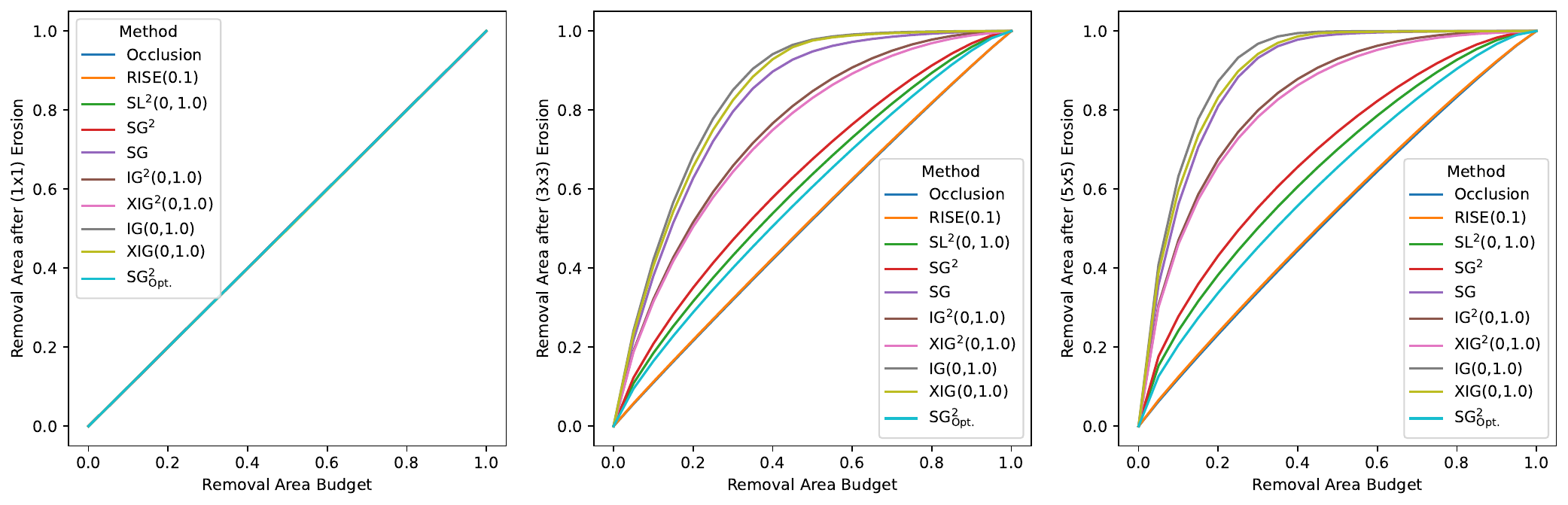}
    
    \caption{This figure visualizes the progression of the removed area on 10K images after erosion, where each curve corresponds to an explanation method, the x-axis shows the budget for removal area of top X\% and the y-axis, shows the actual removed area. As expected, the actual removed area is always more than the budget, we use the actual removed area instead of the decided budget in computing AUCs (justifying the drop after erosion kernel in the performance of sparse explanation methods in \cref{tab:ablation-erosion}). Given that the area under this curve can be a heuristic measure for explanation sparsity, an interesting observation is that unlike the methods with spatial constraints, such as RISE, our approach, SG$_{\text{Opt.}}$ produce spatially smoother explanations, leading to smaller increase in removal area after erosion.}
    \label{fig:ablation_kernel_area}
\end{figure*}

\section{Quantification of Explanation Inconsistency}
As we have mentioned the inconsistency of explanations across hyperparameters in the paper, it is useful to quantify this term, so that we gain deeper insight and know potential underlying patterns.
It should be noted that the inconsistency in each explanation can be used as a heuristic for unreliability or uncertainty in explanations.

We define the inconsistencies in explanations quantitatively as follows,
\begin{align}
    \text{Inconsistency}(x) 
    &= 1- \E_{p(\sigma)}[\text{Cosine-similarity}(e_{0},e_{\sigma})] \\
    &= 1 - \int \text{Cosine-similarity}(e_{0},e_{\sigma})p(\sigma)d\sigma \\
    &\approx 1 - \frac{1}{N}\sum_i^N \text{Cosine-similarity}(e_{0},e_{\sigma_i})
\end{align}
where $e_{\sigma} = \E_{p(\tbsx;\sigma)}[(\nabla f)^2]$ denotes the explanation at hyperparameter $\sigma$, and $p(\sigma) = \text{Unif}(0,1)$. We normalize, downsample and flatten explanations to compute the cosine similarity.
The quantified inconsistency reveals interesting patterns for two explanation methods, yet this quantity is not correlated with predictive entropy. The value of inconsistency is computed per image (each point in the scatter plot visualized in \cref{fig:inconsistency}).

\begin{figure*}
    \centering
    \includegraphics[width=\textwidth]{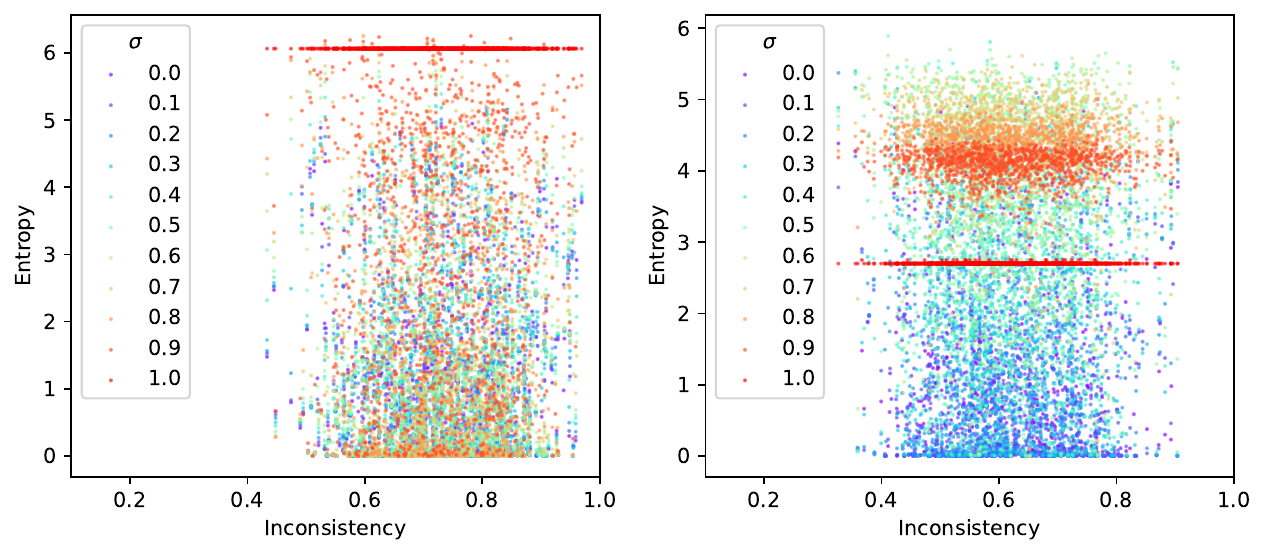}
    \caption{This figure visualizes the inconsistency versus predictive entropy for 1K images from ImageNet at different explanation hyperparameters, for $\text{IG}^2$ on the left, and $\text{SG}^2$ on the right. 
    As expected, the perturbation scale is highly correlated with entropy in $\text{SG}^2$, nonetheless, the pattern is not apparent in $\text{IG}^2$. In both cases, however, it can be seen that there is no clear correlation between predictive entropy and inconsistency.}
    \label{fig:inconsistency}
\end{figure*}

\section{Example Explanations and Their SpectralLens}
\label{appendix:example-outputs}
\subsubsection{A Note on the Locality of SpectralLens.} Although we can break down the input signal to a range of frequencies, it should be noted that the obtained frequencies can only be interpreted locally on an input image and cannot necessarily be generalized to a dataset-level information. 
For example, an image of the sea or the clear sky generates low-frequency explanations in many levels of noise, but it is not necessarily comparable to the low-frequency explanation generated in the background of a car at high noise scales.

\begin{figure*}
    \centering
    \includegraphics[width=0.24\textwidth]{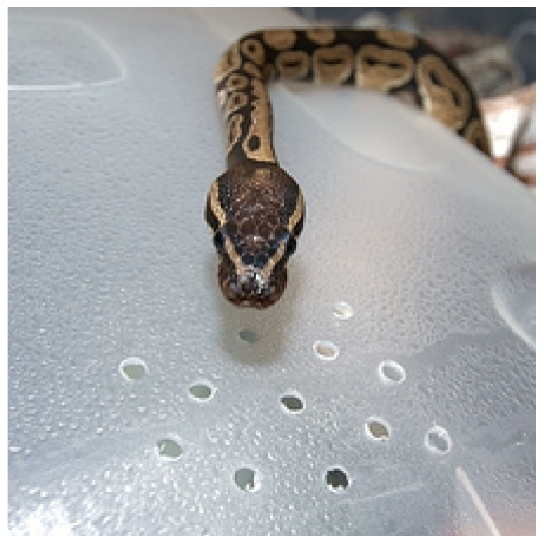}
    \includegraphics[width=0.24\textwidth]{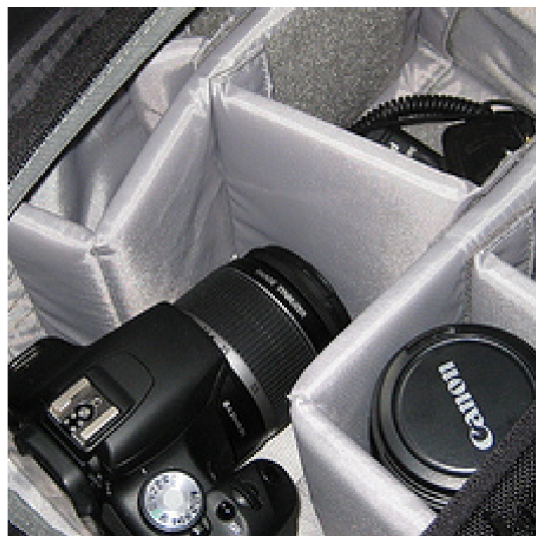}
    \includegraphics[width=0.24\textwidth]{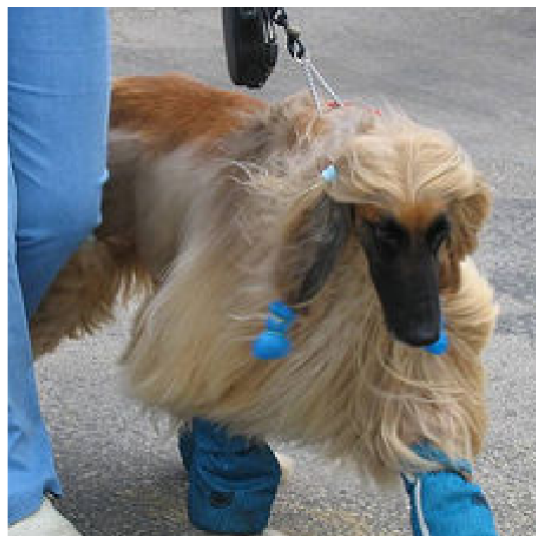}
    \includegraphics[width=0.24\textwidth]{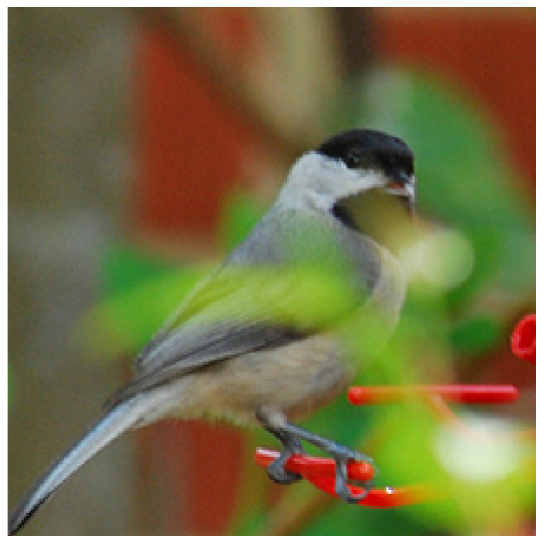}
    \begin{subfigure}[b]{0.24\textwidth}
    \includegraphics[width=\textwidth]{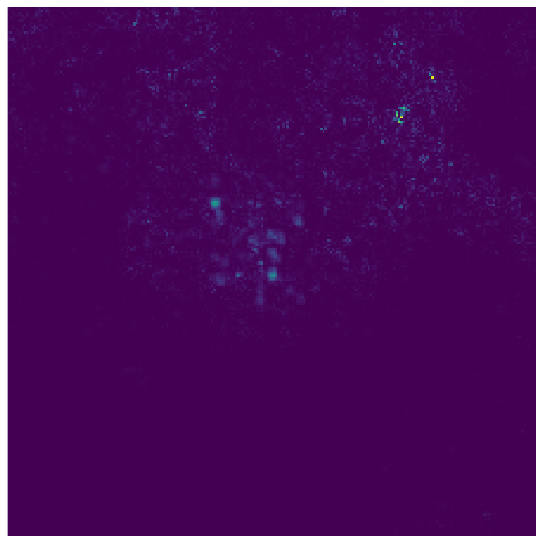}
    \caption{}
    \end{subfigure}
    \begin{subfigure}[b]{0.24\textwidth}
    \includegraphics[width=\textwidth]{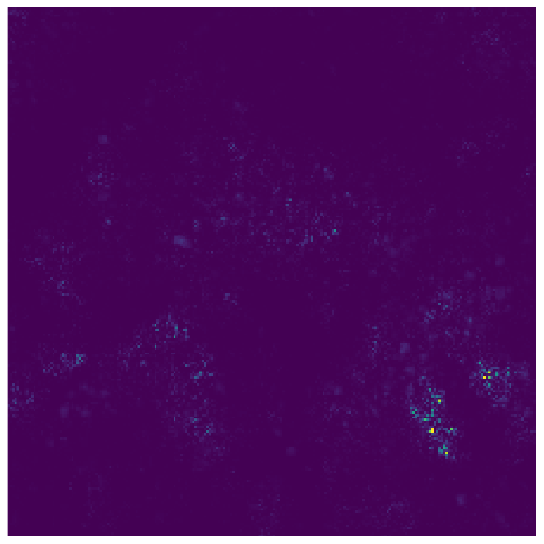}
    \caption{}
    \end{subfigure}
    \begin{subfigure}[b]{0.24\textwidth}
    \includegraphics[width=\textwidth]{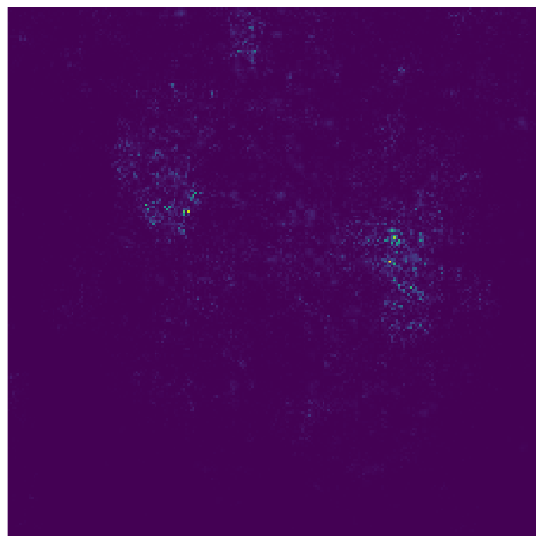}
    \caption{}
    \end{subfigure}
    \begin{subfigure}[b]{0.24\textwidth}
    \includegraphics[width=\textwidth]{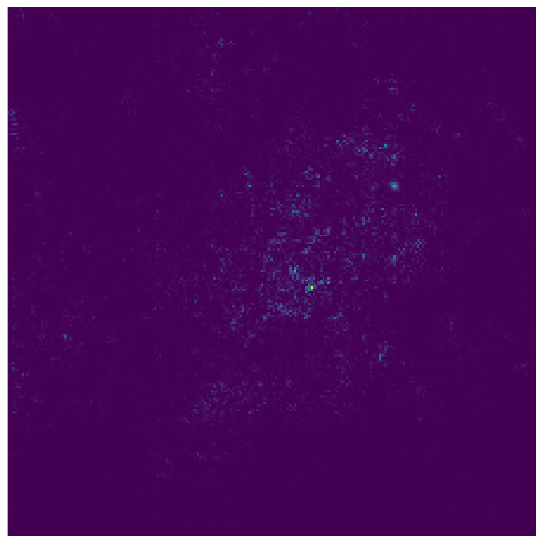}
    \caption{}
    \end{subfigure}
    \caption{This figure visualizes images and their corresponding VanillaGradient$^2$ in each column for ResNet50 trained ImageNet. As can be seen in the visualizations and has been shown before experimentally, the visualizations look sparse and noisy. In our framework, we attribute this phenomenon to the spectral bias of gradient operation which is used in many gradient based explanation methods, see \cref{sect:vanilla-grad} for a more detailed discussion.}
    \label{fig:example-vgsq-row}
\end{figure*}

\begin{figure*}[h]
    \centering
    \includegraphics[width=0.24\textwidth]{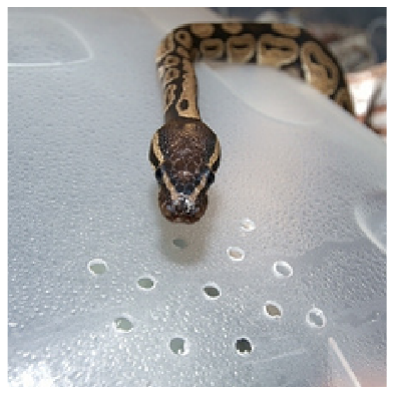}
    \includegraphics[width=0.24\textwidth]{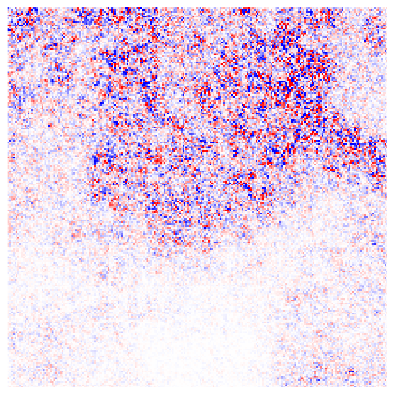}
    \includegraphics[width=0.24\textwidth]{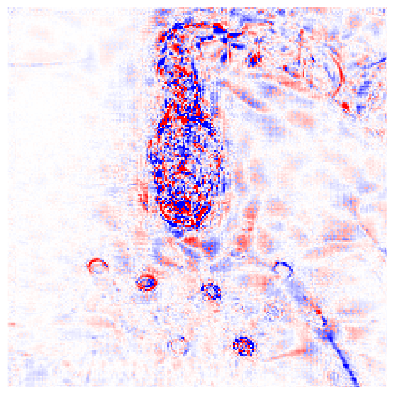}
    \includegraphics[width=0.24\textwidth]{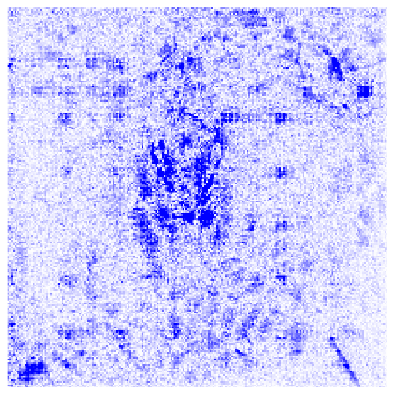}
    \begin{subfigure}[b]{0.24\textwidth}
    \includegraphics[width=\textwidth]{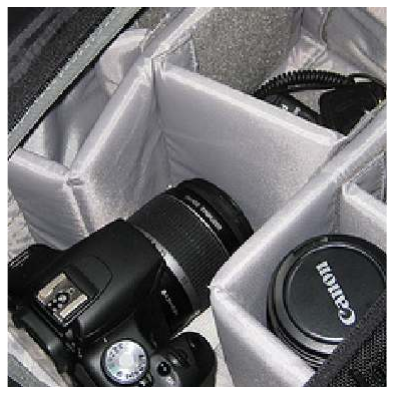}
    \caption{}
    \end{subfigure}
    \begin{subfigure}[b]{0.24\textwidth}
    \includegraphics[width=\textwidth]{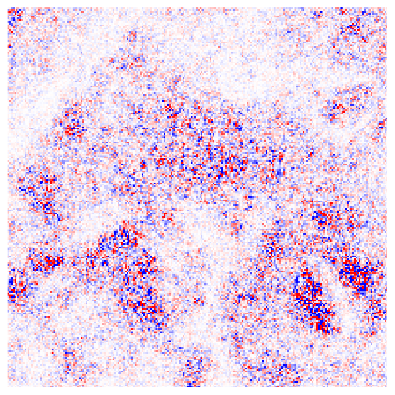}
    \caption{}
    \end{subfigure}
    \begin{subfigure}[b]{0.24\textwidth}
    \includegraphics[width=\textwidth]{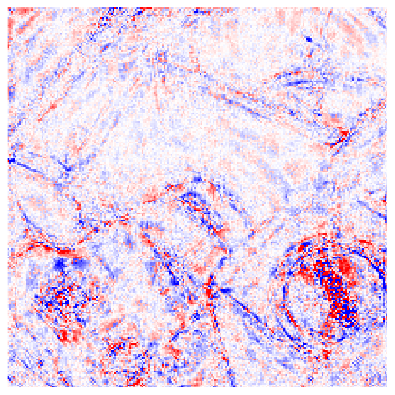}
    \caption{}
    \end{subfigure}
    \begin{subfigure}[b]{0.24\textwidth}
    \includegraphics[width=\textwidth]{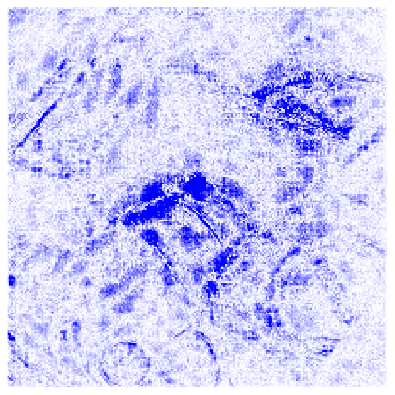}
    \caption{}
    \end{subfigure}
    \caption{In this figure, we have shown SmoothGrad (SG) where each column corresponds to a noise level, namely (b) SG(0.0), (c) SG(0.2) and (d) SG(0.5), for ResNet50 trained on ImageNet. Red pixels indicate the change in the gradient sign relative to (d) SG(0.5). As can be observed here, certain pixels exhibit sign alterations under different hyperparameters. The rationale behind this phenomenon has been unknown but has been justified through our theoretical framework in \cref{rect-kernel-analysis}. The Gaussian kernel and gradient create a band-pass filter in spectral domain, and we can confirm visually that the frequencies in input domain are correlated with frequencies in spatial domain.}
    \label{fig:example-sg-row}
\end{figure*}

\begin{figure*}[h]
    \centering
    \includegraphics[width=0.24\textwidth]{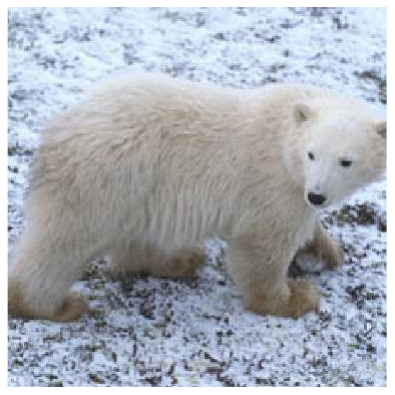}
    \includegraphics[width=0.24\textwidth]{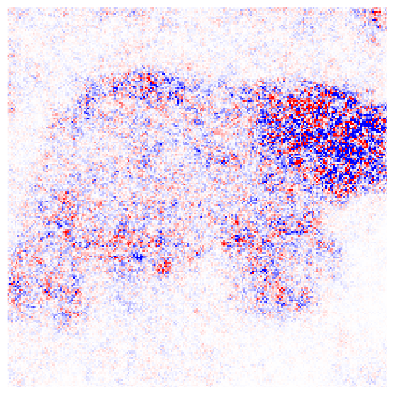}
    \includegraphics[width=0.24\textwidth]{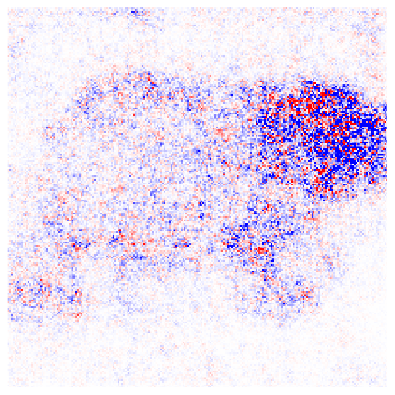}
    \includegraphics[width=0.24\textwidth]{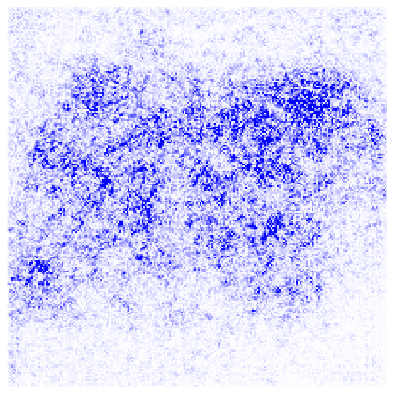}
    \begin{subfigure}[b]{0.24\textwidth}
    \includegraphics[width=\textwidth]{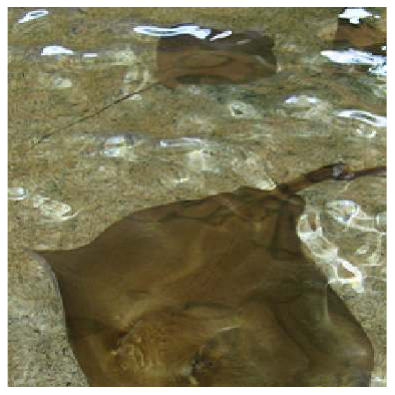}
    \caption{}
    \end{subfigure}
    \begin{subfigure}[b]{0.24\textwidth}
    \includegraphics[width=\textwidth]{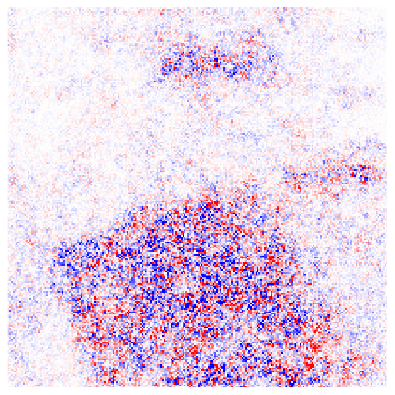}
    \caption{}
    \end{subfigure}
    \begin{subfigure}[b]{0.24\textwidth}
    \includegraphics[width=\textwidth]{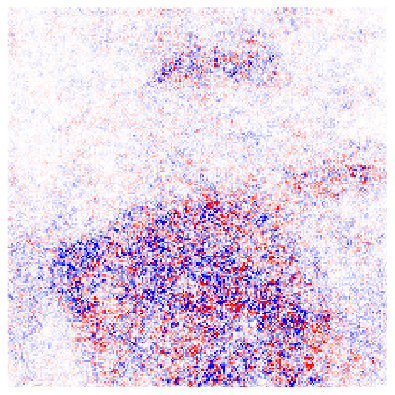}
    \caption{}
    \end{subfigure}
    \begin{subfigure}[b]{0.24\textwidth}
    \includegraphics[width=\textwidth]{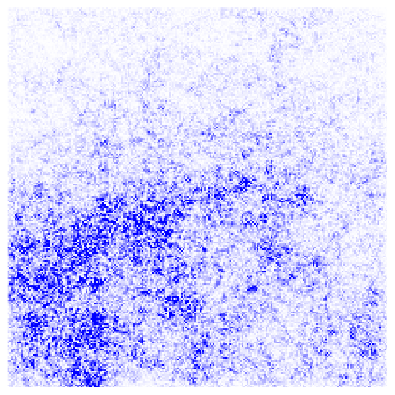}
    \caption{}
    \end{subfigure}
    \caption{This figure illustrates IntegratedGrad (IG) without input multiplication at various hyperparameters for ResNet50 trained on ImageNet. The parameter $\sigma$, governs the integration of gradients between the zero baseline and the input, where a value of $\sigma=0.5$ indicates integration from the image towards the midpoint between the image and the zero baseline. The red pixels denote the alterations in the sign of (b) IG(0.3), and (c) IG(0.6), relative to (d) IG(1.0).}
    \label{fig:example-ig-row}
\end{figure*}

\begin{figure*}[h]
    \centering
    \includegraphics[width=0.24\textwidth]{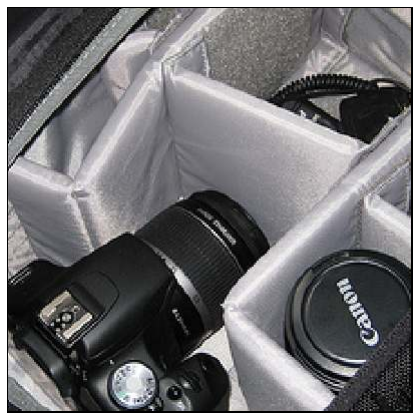}
    \includegraphics[width=0.24\textwidth]{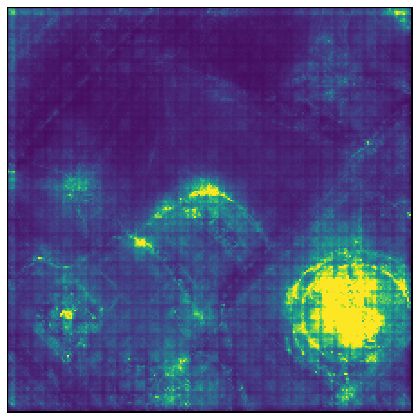}
    \includegraphics[width=0.24\textwidth]{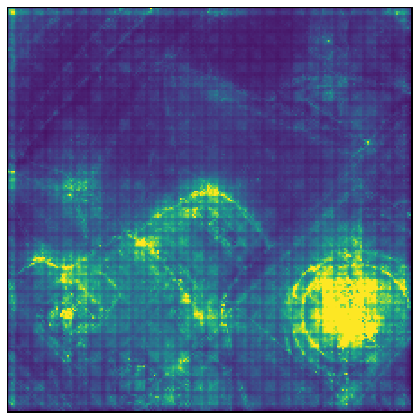}
    \includegraphics[width=0.24\textwidth]{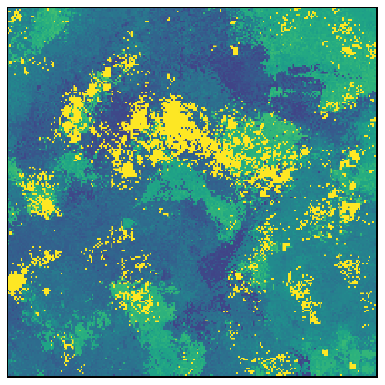}
    \includegraphics[width=0.24\textwidth]{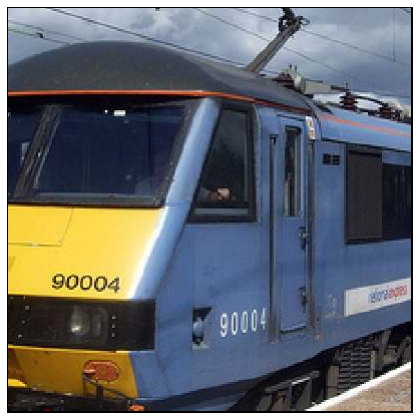}
    \includegraphics[width=0.24\textwidth]{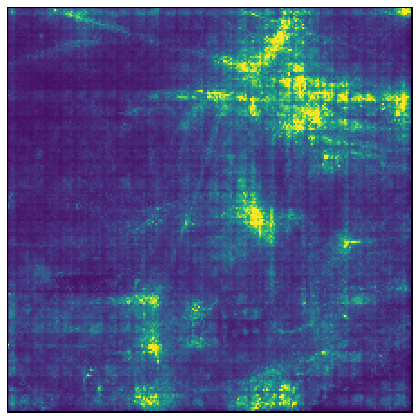}
    \includegraphics[width=0.24\textwidth]{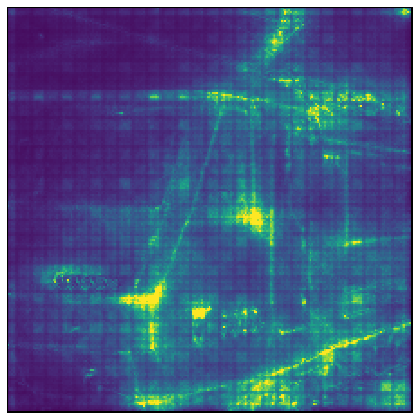}
    \includegraphics[width=0.24\textwidth]{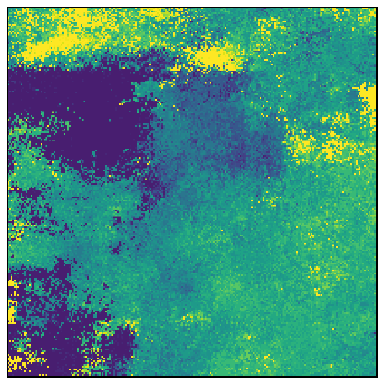}
    \includegraphics[width=0.24\textwidth]{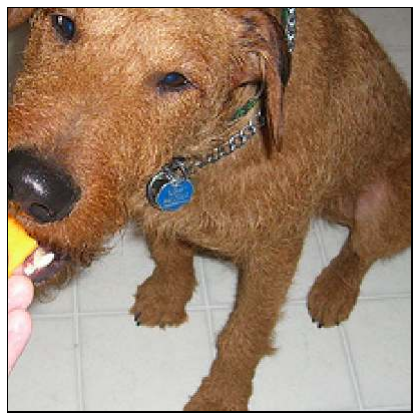}
    \includegraphics[width=0.24\textwidth]{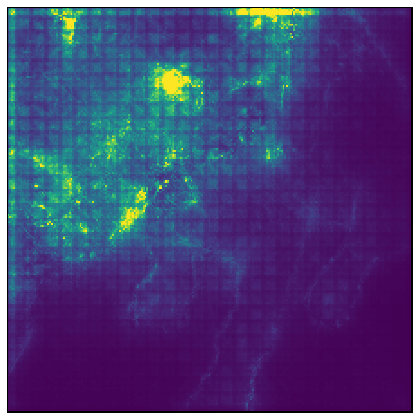}
    \includegraphics[width=0.24\textwidth]{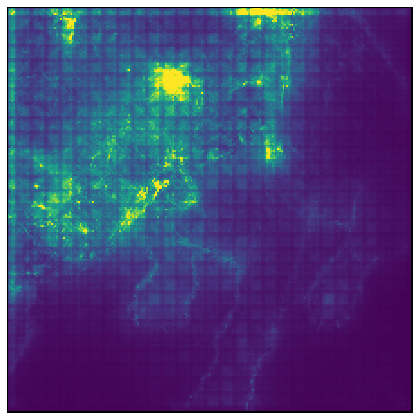}
    \includegraphics[width=0.24\textwidth]{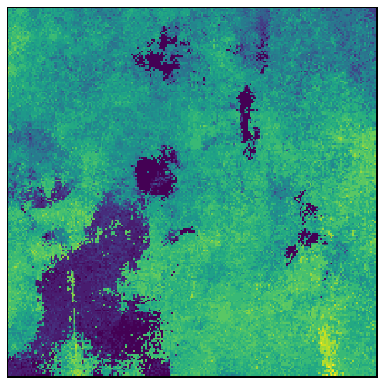}
    \includegraphics[width=0.24\textwidth]{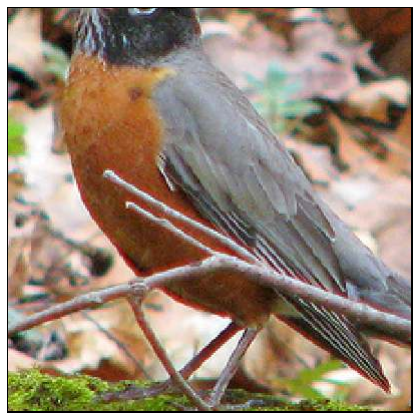}
    \includegraphics[width=0.24\textwidth]{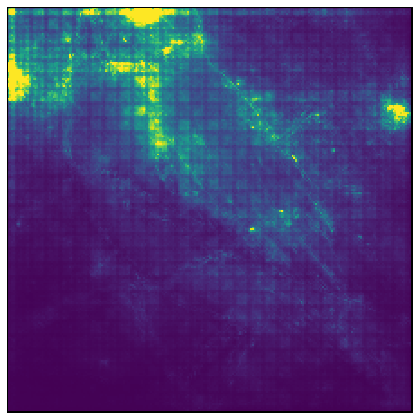}
    \includegraphics[width=0.24\textwidth]{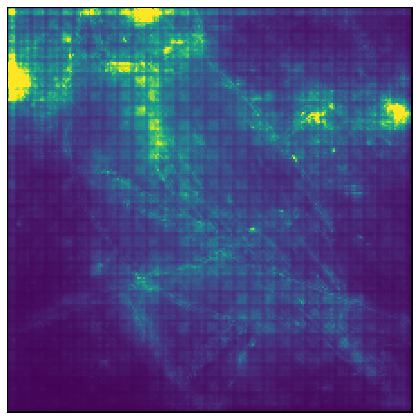}
    \includegraphics[width=0.24\textwidth]{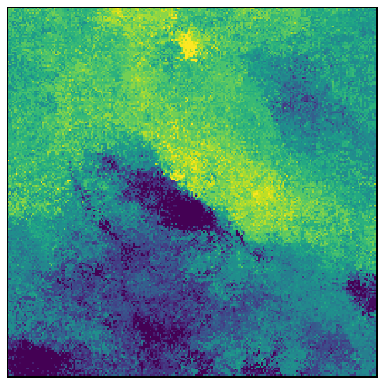}
    \begin{subfigure}[b]{0.24\textwidth}
    \includegraphics[width=\textwidth]{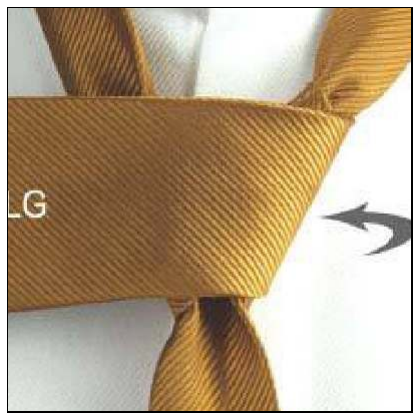}
    \caption{}
    \end{subfigure}
    \begin{subfigure}[b]{0.24\textwidth}
    \includegraphics[width=\textwidth]{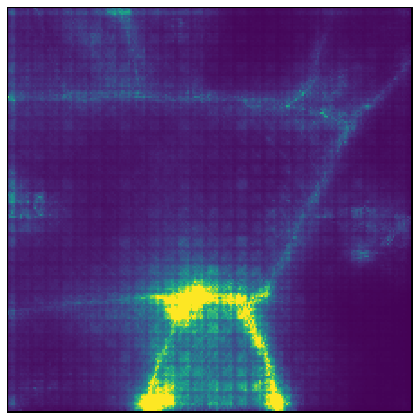}
    \caption{}
    \end{subfigure}
    \begin{subfigure}[b]{0.24\textwidth}
    \includegraphics[width=\textwidth]{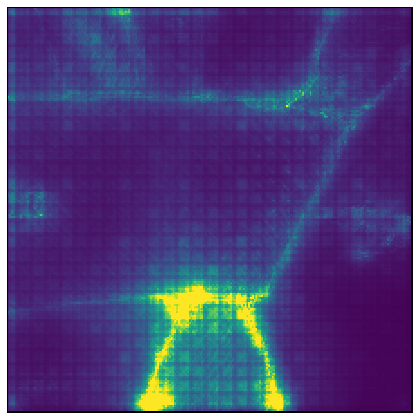}
    \caption{}
    \end{subfigure}
    \begin{subfigure}[b]{0.24\textwidth}
    \includegraphics[width=\textwidth]{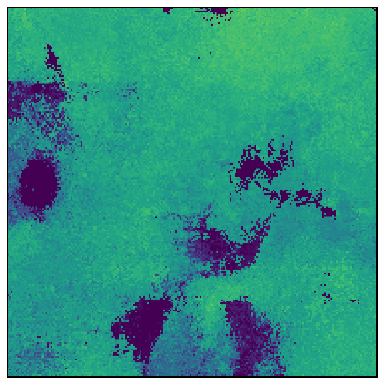}
    \caption{}
    \end{subfigure}
    \caption{The figure presents additional examples, including SmoothGrad$^2$ at maximum cosine similarity (b), SpectralLens$^2$ (c), and ArgLens (d) for ResNet50 trained on ImageNet. While SmoothGrad$^2$ may exhibit inconsistencies, as demonstrated in \cref{fig:example-sgsq-row-appendix}, leveraging cosine similarity provides a consistent means to determine its hyperparameters. Consequently, SmoothGrad$^2$ demonstrates consistency and upholds the visual quality associated with SmoothGrad$^2$ (SG$^2$).
    Addressing the inconsistency issue, another approach involves employing an explanation prior and aggregating information across different frequency bands, which we denote as SpectralLens (SL). Moreover, while SG$^2$ or SL effectively visualize the actual contributions, ArgLens enables the visualization of the contributions' sensitivity to noise. For AL we have visualized the frequencies obtained by $\omega = 1/ (1+\mathrm{AL})$.}
    \label{fig:example-al-row}
\end{figure*}

\begin{figure*}[h]
    \centering
    \includegraphics[width=0.24\textwidth]{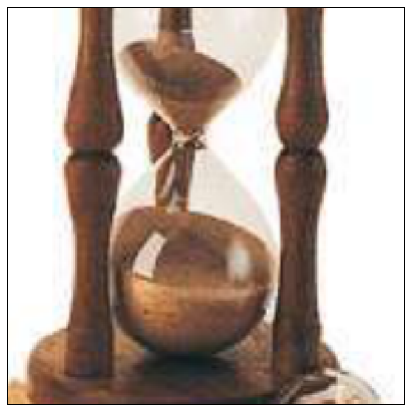}
    \includegraphics[width=0.24\textwidth]{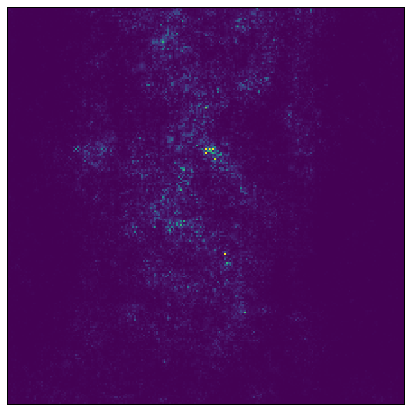}
    \includegraphics[width=0.24\textwidth]{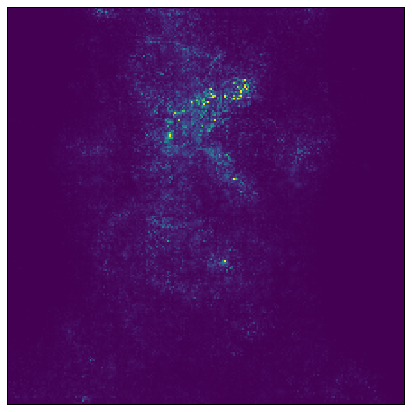}
    \includegraphics[width=0.24\textwidth]{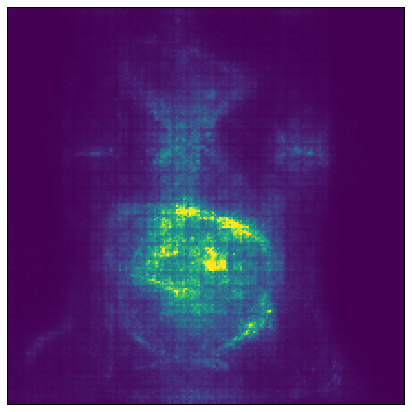}
    \includegraphics[width=0.24\textwidth]{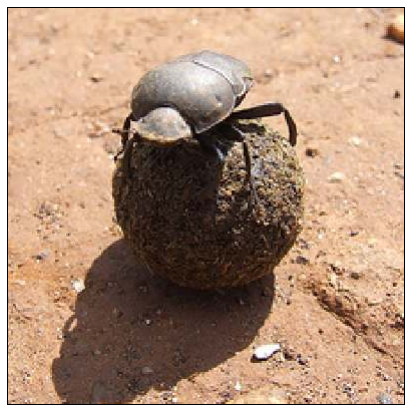}
    \includegraphics[width=0.24\textwidth]{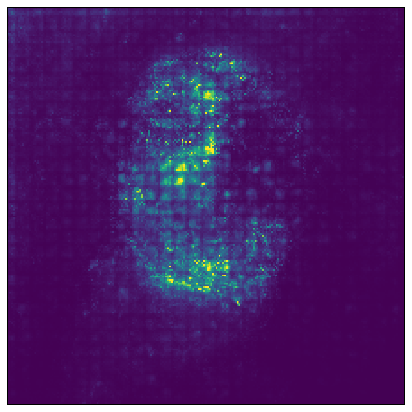}
    \includegraphics[width=0.24\textwidth]{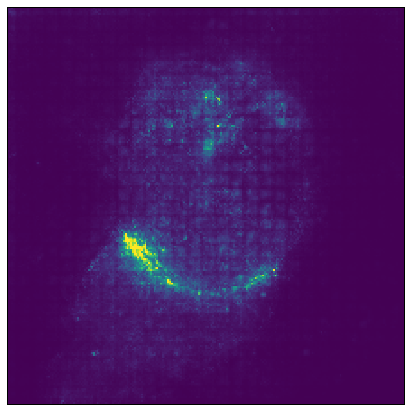}
    \includegraphics[width=0.24\textwidth]{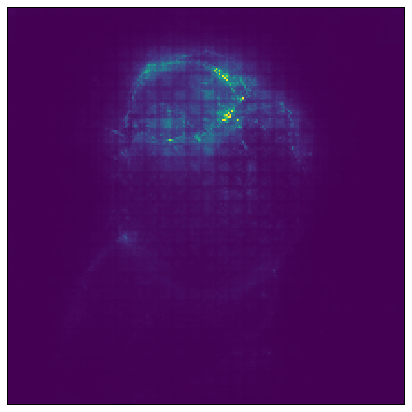}
    \includegraphics[width=0.24\textwidth]{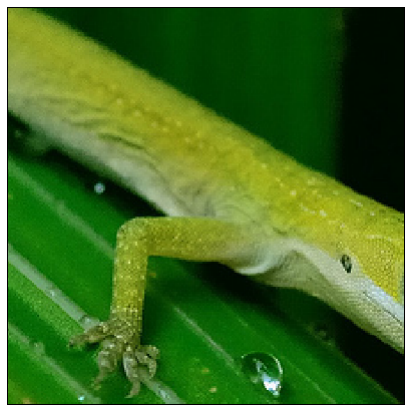}
    \includegraphics[width=0.24\textwidth]{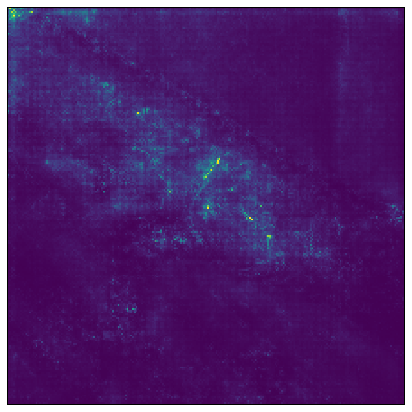}
    \includegraphics[width=0.24\textwidth]{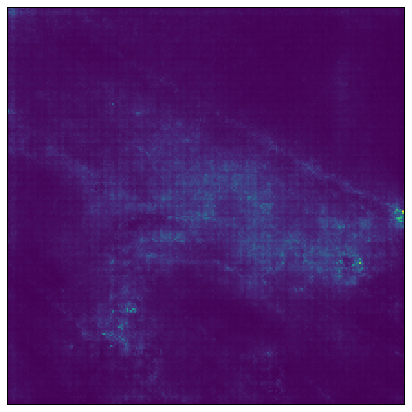}
    \includegraphics[width=0.24\textwidth]{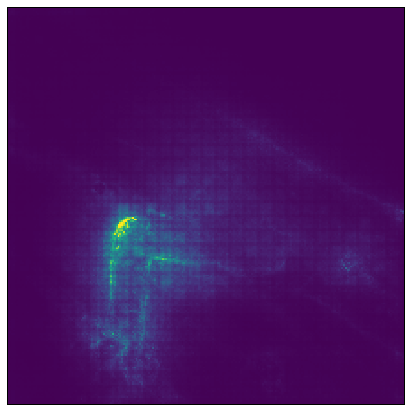}
    \includegraphics[width=0.24\textwidth]{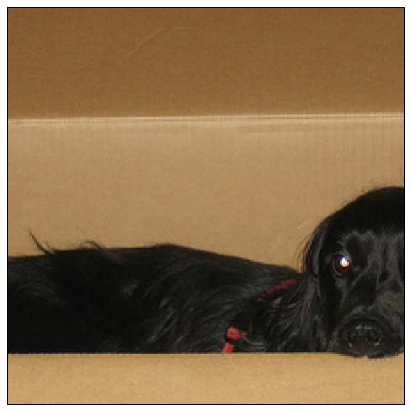}
    \includegraphics[width=0.24\textwidth]{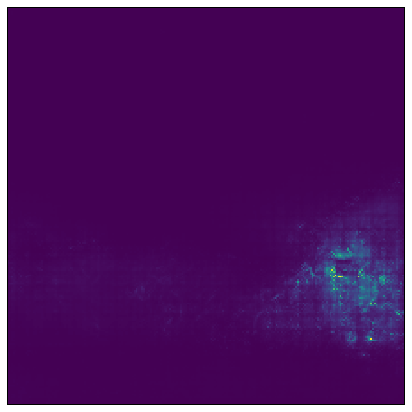}
    \includegraphics[width=0.24\textwidth]{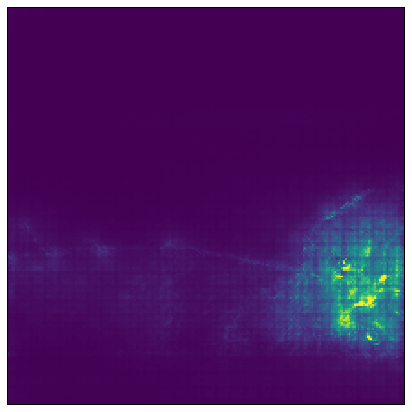}
    \includegraphics[width=0.24\textwidth]{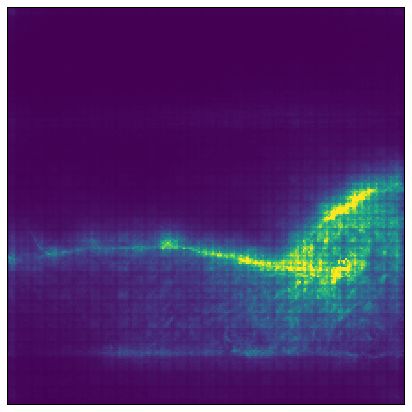}
    \begin{subfigure}[b]{0.24\textwidth}
    \includegraphics[width=\textwidth]{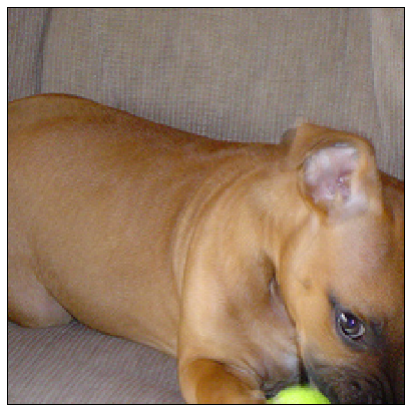}
    \caption{}
    \end{subfigure}
    \begin{subfigure}[b]{0.24\textwidth}
    \includegraphics[width=\textwidth]{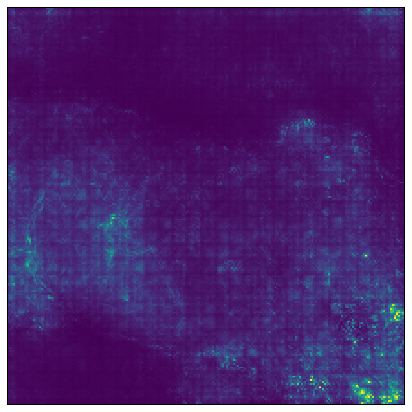}
    \caption{}
    \end{subfigure}
    \begin{subfigure}[b]{0.24\textwidth}
    \includegraphics[width=\textwidth]{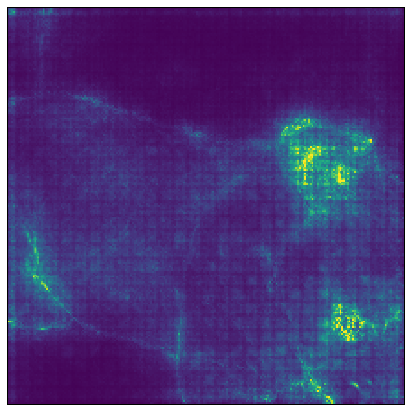}
    \caption{}
    \end{subfigure}
    \begin{subfigure}[b]{0.24\textwidth}
    \includegraphics[width=\textwidth]{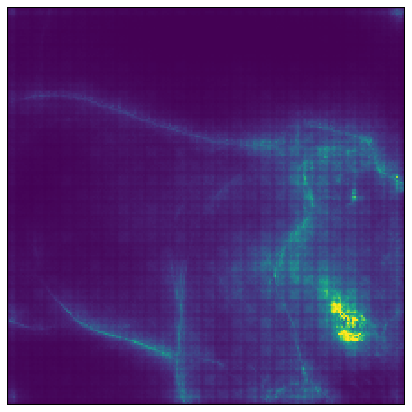}
    \caption{}
    \end{subfigure}
    \caption{The figure illustrates sample explanations generated by varying hyperparameters of SmoothGrad-Squared \cite{hooker_benchmark_2019,smilkov_smoothgrad_2017} for ResNet-50 on ImageNet. It is evident from the figure that using different hyperparameters results in diverse explanations. Such variations have been demonstrated in recent studies for other explanation techniques as well \cite{kindermans_reliability_2017,alvarez-melis_robustness_2018,brughmans_disagreement_2023,rong_consistent_2022}.
    }
    \label{fig:example-sgsq-row-appendix}
\end{figure*}
\end{document}